\documentclass{article}
\usepackage{algorithm}
\usepackage{algpseudocode}
\usepackage{graphicx}
\usepackage{amsmath}
\usepackage{amssymb}%
\usepackage{verbatim}
\usepackage[preprint]{neurips_2026}

\makeatletter
\renewcommand{\@noticestring}{Preprint. Submitted for peer review on May 6, 2026. This version has been revised thanks to initial feedback.}
\makeatother

\usepackage[utf8]{inputenc} %
\usepackage[T1]{fontenc}    %
\usepackage[hidelinks]{hyperref}       %
\usepackage{url}            %
\usepackage{booktabs}       %
\usepackage{amsfonts}       %
\usepackage{nicefrac}       %
\usepackage{microtype}      %
\usepackage{xcolor}         %

\setcitestyle{round, semicolon, aysep={,}}

\usepackage{tikz}
\usetikzlibrary{arrows.meta}
\usepackage{subcaption}
\usepackage{enumitem}
\setlist[itemize]{leftmargin=1.5em}
\usepackage[disable]{todonotes} %
\usepackage{threeparttable}

\usepackage{array}
\usepackage{tabularx}

\RemoveFromHook{package/amsthm/after}[firstaid/aliascounter]
\usepackage{amsthm}
\usepackage{thmtools}
\usepackage{cleveref}
\usepackage{multirow}

\declaretheorem[numberwithin=section]{theorem}
\declaretheorem[sibling=theorem]{lemma}
\declaretheorem[sibling=theorem]{proposition}
\declaretheorem[sibling=theorem]{corollary}
\declaretheorem[style=definition, sibling=theorem]{definition}
\declaretheorem[style=definition, sibling=theorem]{remark}
\declaretheorem[style=definition, sibling=theorem]{example}

\creflabelformat{equation}{#2#1#3}
\crefname{equation}{eq.}{eqs.}
\Crefname{equation}{Equation}{Equations}

\newcommand{\Q}{\mathbb{Q}}
\newcommand{\R}{\mathbb{R}}
\newcommand{\Z}{\mathbb{Z}}
\newcommand{\A}{\mathbb{A}}

\newcommand{\Qp}{\mathbb{Q}_p}
\newcommand{\Zp}{\mathbb{Z}_p}

\newcommand{\mW}{\boldsymbol{W}}
\newcommand{\vx}{\boldsymbol{x}}
\newcommand{\vy}{\boldsymbol{y}}
\newcommand{\ve}{\boldsymbol{e}}
\newcommand{\vb}{\boldsymbol{b}}
\newcommand{\vc}{\boldsymbol{c}}
\newcommand{\vq}{\boldsymbol{q}}
\newcommand{\vr}{\boldsymbol{r}}
\newcommand{\vs}{\boldsymbol{s}}

\newcommand{\vv}{\boldsymbol{v}}
\newcommand{\vw}{\boldsymbol{w}}
\newcommand{\vz}{\boldsymbol{z}}
\newcommand{\vtheta}{\boldsymbol{\theta}}
\newcommand{\vzeta}{\boldsymbol{\zeta}}
\newcommand{\vI}{\boldsymbol{I}}
\newcommand{\vgamma}{\boldsymbol{\gamma}}

\DeclareMathOperator*{\argmax}{argmax}

\newcommand{\nparams}{d}   %
\newcommand{\nin}{m}       %
\newcommand{\nout}{n}      %
\newcommand{\nex}{N}       %

\title{Continuous Optimization for \(p\)-adic Models}

\author{%
  Julian Salazar\\
  Google DeepMind\\
  \texttt{julsal@google.com} \\
  \And
  Dimitri Kanevsky\\
  Google DeepMind\\
  \texttt{dkanevsky@google.com} \\
  \And
  Matt Harvey\\
  Google DeepMind\\
  \texttt{mattharvey@google.com} \\
  \AND
  Pascal Getreuer\\
  Google DeepMind\\
  \And
  Lucas Dixon\\
  Google DeepMind\\
}

\begin{document}

\maketitle

\begin{abstract}
We present the first method for native, continuous gradient descent for machine learning models with $p$-adic parameters. Existing native optimizers are discrete, mostly combinatorial searches, as the $p$-adic numbers $\mathbb{Q}_p$ are totally disconnected, with standard losses that are flat away from their minima. To enable continuous optimization, we propose working with $\mathbb{Q}_p$ via its Berkovich affine line: a canonical, path-connected expansion of $\mathbb{Q}_p$ that preserves its isometries and uniquely extends its analytic maps. This hull is a metric tree with interpretable points and local derivatives, which we show enables effective optimizers and backpropagation. We formulate gradient descent and show that its approximations efficiently learn linear models with coefficients in $\mathbb{Q}_p$ to do modular arithmetic, an XOR-like task not expressible by linear models in $\mathbb{R}$. We also demonstrate momentum and Adam variants, linear regression, and classification on binary-encoded hierarchies (Quillian semantic networks), addressing open problems posed by \citet{martins2025learning}.
\end{abstract}

\section{Introduction}

Most machine learning methods assume inputs and parameters in real Euclidean
space \(\R^n\). The field of \(p\)-adic numbers \(\Qp\) offers another
geometry: its distances \(d(x,y)\) depend on divisibility by \(p\)
and are \textit{ultrametric}, satisfying
\(d(x,z)\le\max\{d(x,y),d(y,z)\}\) (\Cref{sec:background}). For example, there is no real linear classifier that separates even numbers from odd ones.
But in \(\Q_2\), evens are closer to each
other than to any odd number; the $p$-adic classifier \(v_p(wx+b)\ge0\) is perfect at
\(p=2\), \(w=\frac12\), \(b=0\).

Such models may be useful for real-world data with
known ultrametric or \(p\)-adic structure
(e.g., \citealp{mantegna1999hierarchical, dragovich2010genome, hua2021padic}), for hierarchies where properties correlate with subtrees (\Cref{sec:quillian}),
and for approximating number-theoretic algorithms (such as \citealp{satoh2000canonical} or \citealp{kedlaya2001counting}).
While hyperbolic embeddings \citep{nickelKiela2017} and ultrametric
fitting \citep{chierchia2019ultrametric} approximate such geometries,
what they learn is real-valued (embedding coordinates or edge weights of a graph); they do not learn models with \(p\)-adic parameters.
An \textit{isometric embedding}---a map preserving all pairwise
distances---of \(N\) distinct ultrametric points requires at least
\(N-1\) Euclidean dimensions \citep{lemin1985isometric}.

However, direct optimization is difficult: \(\Qp\) is totally disconnected,
and natural losses have vanishing gradients (\Cref{sec:background}).
Existing methods use discrete searches
\citep{khrennikov2000learning, baker2022number, baker2025linear, zubarev2025adic, martins2025learning,
mihara2026padicpolynomial} or real-valued optimization
\citep{zuniga2024deep, nguessan2025vpunns}.
Systematic searches can scale exponentially with dimension or depth;
random walks and annealing avoid exhaustive enumeration but do not
use gradients to guide their proposals.

Instead, we exploit the analytic structure of machine learning models:
affine maps composed with piecewise \textit{analytic functions}, which
admit local Taylor expansions. For \(p\)-adic models, analytic geometry
provides a canonical path-connected enlargement of \(\Qp\): the Berkovich
affine line, to which analytic maps and the \(p\)-adic norm extend
\citep{tate1971rigid, berkovich1990spectral}.
Using these extensions, we construct continuous losses with informative
directional derivatives and demonstrate native
\textit{continuous optimization of models with \(p\)-adic parameters for the first time}.
Specifically:
\begin{itemize}
\item We construct a canonical path-connected space for \(p\)-adic numbers: the tight span \(\Gamma_p\) of \(\Qp\), which is its convex hull in the Berkovich affine line. Model parameters live in the product space \(\Gamma_p^{\nparams}\).
\item We develop the first continuous general \(p\)-adic optimizers: joint gradient descent, Momentum, and Adam, approximated through grouped coordinate descent (\Cref{sec:coordinate-implementation}).
\item We define informative losses that continuously relax \(p\)-adic regression and classification, and efficiently train linear models for modular arithmetic and binary-encoded semantic hierarchies.
\item We formulate backpropagation and release an OSS library (\url{https://github.com/google-deepmind/padic-ml}) to facilitate research towards deep, truly \(p\)-adic neural networks.
\end{itemize}
Our work resolves Open Problem 6.1 (linear multi-class classifiers) and addresses 6.2 (gradient-based optimization) of \citet{martins2025learning}, and opens a path to 6.3 (multi-layer networks).

\section{Background: \(p\)-adics and Related Work}
\label{sec:background}

A \textit{field} is a set where addition, multiplication, and their inverses (subtraction, division)
are well-defined (except \(x /0\)) and satisfy the usual arithmetic laws.
The \textit{\(p\)-adic numbers} \(\Qp\) form such a field \citep{gouvea2020padic} and can be defined as the set of formal series
\[
x := \sum_{i\ge k}a_i p^i,\quad k\in\Z,\quad a_i\in\{0,\dotsc,p-1\},\quad \text{with }a_k\ne0\text{ unless all }a_i=0.
\]
The series with no negative powers of \(p\) form the
\textit{\(p\)-adic integers \(\Zp\)}.
Written as \textit{digits}, the series extends infinitely
\textit{to the left}: \(\dotsc a_2a_1a_0.a_{-1}\dotsc a_k\).
Addition and multiplication are defined by adding and multiplying
the series and carrying in base \(p\). Subtraction \(x-y\) is the unique
\(z\) satisfying \(y+z=x\); for \(y\ne0\), division \(x / y\) is the unique \(z\)
satisfying \(yz=x\).
For example, \(\dotsc666_7=-1\), since adding \(1\) carries forever
to give \(\dotsc000_7\). 

The \textit{valuation} \(v_p(x)\) is \(k\), the index of the rightmost
nonzero digit, with \(v_p(0):=+\infty\). The \textit{\(p\)-adic absolute value} or \textit{norm} is \(|x|_p := p^{-v_p(x)}\), with \(|0|_p := 0\). For example, \(1.25=1\cdot2^0+0\cdot2^{-1}+1\cdot2^{-2}\), so \(v_2(1.25)=-2\) and \(|1.25|_2=4\). The \textit{\(p\)-adic distance} \(d(x,y) = |x - y|_p\) is \(\textit{ultrametric}\), i.e., non-negative, symmetric, zero only when \(x = y\), and satisfies the \textit{strong triangle inequality}:
\begin{equation}
\label{eq:ultrametric}
|x - z|_p \le \max(|x - y|_p, |y - z|_p), \quad \text{with equality when } |x-y|_p \neq |y-z|_p.
\end{equation}
This discretely \(\R\)-valued function has unintuitive behavior: \(d(0,25) = \frac{1}{25}\) but \(d(0,25.1) = 5\).

\(\Qp\) is not an \textit{ordered field}: no total ordering is compatible with its addition and multiplication. Unlike \(p\)-adic analysis, which studies maps \(f : \mathbb{Q}_p^m \to \mathbb{Q}_p^n\), optimization minimizes a loss, requiring values in an ordered field like \(\R\).

\(\Qp\) is \textit{totally path-disconnected}, i.e., all \textit{paths} (continuous functions \(\gamma: [0, 1] \to \Qp\)) are constant. Hence functions \(L : \mathbb{Q}_p^{\nparams} \to \mathbb{R}\) do not support the usual pathwise derivative.
Natural loss functions like absolute error \(\ell_1(\vtheta) := |f(x; \vtheta) - y|_p\) are locally constant (i.e., ``flat'') away
from their minima, which gives vanishing gradients \citep{baker2025linear,zubarev2025adic}. Indeed, two predictions closer to each other than to the target have the same loss: if \(|f(x;\vtheta) - f(x;\vtheta')|_p < |f(x;\vtheta) - y|_p\), then \cref{eq:ultrametric} forces \(\ell_1(\vtheta') = \ell_1(\vtheta)\). Absolute error is
thus locally constant at parameters that are not exact fits.

See \Cref{fig:q3} for a depiction of \(\Q_3\). %

\begin{figure}[t]
\tikzset{every picture/.append style={xscale=0.9, yscale=0.8}} 
\setlength{\abovecaptionskip}{7pt}%
    \centering

    \begin{subfigure}[b]{0.48\textwidth}
        \centering
        \begingroup%
        \definecolor{qPointBlue}{HTML}{164CAA}%
        \definecolor{qLabelGray}{HTML}{737B86}%
        \begin{tikzpicture}[
            x=.46cm,y=.46cm,font=\small,
            qpoint/.style={circle,fill=qPointBlue,inner sep=0pt,minimum size=1.5pt},
            qselected/.style={circle,fill=qPointBlue,inner sep=0pt,minimum size=2.3pt},
            qlabel/.style={inner sep=1pt,outer sep=0pt},
            qleader/.style={draw=qLabelGray,line width=.35pt,shorten <=1pt,shorten >=1.6pt}
        ]
        \foreach \qa/\qDa in {150/0,390/1,270/2} {
            \pgfmathsetmacro{\qAx}{2.5*cos(\qa)}
            \pgfmathsetmacro{\qAy}{2.5*sin(\qa)}
            \foreach \qb/\qDb in {0/0,120/1,-120/2} {
                \pgfmathsetmacro{\qBx}{\qAx+cos(\qa+\qb)}
                \pgfmathsetmacro{\qBy}{\qAy+sin(\qa+\qb)}
                \foreach \qc/\qDc in {0/0,120/1,-120/2} {
                    \pgfmathsetmacro{\qCx}{\qBx+0.4*cos(\qa+\qb+\qc)}
                    \pgfmathsetmacro{\qCy}{\qBy+0.4*sin(\qa+\qb+\qc)}
                    \foreach \qd/\qDd in {0/0,120/1,-120/2} {
                        \pgfmathtruncatemacro{\qIndex}{\qDa+3*\qDb+9*\qDc+27*\qDd}
                        \coordinate (QPoint-\qIndex) at
                            ({\qCx+0.14*cos(\qa+\qb+\qc+\qd)},
                             {\qCy+0.14*sin(\qa+\qb+\qc+\qd)});
                        \node[qpoint] at (QPoint-\qIndex) {};
                    }
                }
            }
        }
        \foreach \qIndex in {0,9,27,1,28,2,29}
            \node[qselected] at (QPoint-\qIndex) {};
        \node[qlabel] (QZero) at (-5.15,2.8) {\(0\)};
        \node[qlabel] (QTwentySeven) at (-5.15,1.45) {\(27\)};
        \node[qlabel] (QNine) at (-4.85,.10) {\(9\)};
        \node[qlabel] (QOne) at (4.85,2.85) {\(1\)};
        \node[qlabel] (QTwentyEight) at (2.20,3.75) {\(28\)};
        \node[qlabel] (QTwo) at (-1.20,-5.10) {\(2\)};
        \node[qlabel] (QTwentyNine) at (1.55,-4.90) {\(29\)};
        \draw[qleader] (QZero.east) -- (QPoint-0);
        \draw[qleader] (QTwentySeven.east) -- (QPoint-27);
        \draw[qleader] (QNine.north east) -- (QPoint-9);
        \draw[qleader] (QOne.south west) -- (QPoint-1);
        \draw[qleader] (QTwentyEight.south) -- (QPoint-28);
        \draw[qleader] (QTwo.north east) -- (QPoint-2);
        \draw[qleader] (QTwentyNine.north west) -- (QPoint-29);
        \end{tikzpicture}%
        \endgroup%

        \caption{\(\mathbb{Q}_3\): disconnected; clusters are near w.r.t. \(d_p\)}
        \label{fig:q3}
    \end{subfigure}
    \hfill
    \begin{subfigure}[b]{0.48\textwidth}
        \centering

        \begingroup%
        \def\qElevation{35}%
        \def\qAzimuth{18}%
        \def\qScale{0.438} %
        \def\qHeightZero{4.05}%
        \def\qHeightOne{1.69}%
        \def\qHeightTwo{0.675}%
        \def\qHeightThree{0.251}%
        \def\qBaseRadius{4.65}%
        \pgfmathsetmacro{\qXX}{\qScale*cos(\qAzimuth)}%
        \pgfmathsetmacro{\qXY}{\qScale*sin(\qAzimuth)*sin(\qElevation)}%
        \pgfmathsetmacro{\qYX}{-\qScale*sin(\qAzimuth)}%
        \pgfmathsetmacro{\qYY}{\qScale*cos(\qAzimuth)*sin(\qElevation)}%
        \pgfmathsetmacro{\qZY}{\qScale*cos(\qElevation)}%
        \definecolor{qLeafBlue}{HTML}{164CAA}%
        \definecolor{qEdgeGray}{HTML}{555E69}%
        \definecolor{qVertexGray}{HTML}{38414A}%
        \definecolor{qPlaneBlue}{HTML}{F0F4FA}%
        \definecolor{qPlaneRim}{HTML}{D7E0ED}%
        \definecolor{qLeaderGray}{HTML}{737B86}%
        \begin{tikzpicture}[
            x={(\qXX cm,\qXY cm)},
            y={(\qYX cm,\qYY cm)},
            z={(0cm,\qZY cm)},
            font=\small, line cap=round, line join=round,
            qedge/.style={draw=qEdgeGray},
            qvertex/.style={circle,fill=qVertexGray,inner sep=0pt,minimum size=1.2pt},
            qleaf/.style={circle,fill=qLeafBlue,inner sep=0pt,minimum size=1.45pt},
            qleader/.style={->,draw=qLeaderGray,line width=.45pt,shorten >=1.5pt},
            qlabel/.style={anchor=west,inner sep=0pt,outer sep=0pt}
        ]
        \path[fill=qPlaneBlue,draw=qPlaneRim,line width=.5pt]
            ({\qBaseRadius*cos(150)},{\qBaseRadius*sin(150)},0) --
            ({\qBaseRadius*cos(30)},{\qBaseRadius*sin(30)},0) --
            (0,-\qBaseRadius,0) -- cycle;
        \coordinate (QRoot) at (0,0,\qHeightZero);
        \foreach \qa/\qDa in {150/0,390/1,270/2} {
            \pgfmathsetmacro{\qAx}{2.5*cos(\qa)}
            \pgfmathsetmacro{\qAy}{2.5*sin(\qa)}
            \coordinate (QA) at (\qAx,\qAy,\qHeightOne);
            \draw[qedge,line width=.52pt] (QRoot) -- (QA);
            \node[qvertex,minimum size=1.8pt] at (QA) {};
            \foreach \qb/\qDb in {0/0,120/1,-120/2} {
                \pgfmathsetmacro{\qBx}{\qAx+cos(\qa+\qb)}
                \pgfmathsetmacro{\qBy}{\qAy+sin(\qa+\qb)}
                \coordinate (QB) at (\qBx,\qBy,\qHeightTwo);
                \pgfmathtruncatemacro{\qPrefixTwo}{\qDa+3*\qDb}
                \coordinate (QBallTwo-\qPrefixTwo) at (QB);
                \draw[qedge,line width=.43pt] (QA) -- (QB);
                \node[qvertex,minimum size=1.5pt] at (QB) {};
                \foreach \qc/\qDc in {0/0,120/1,-120/2} {
                    \pgfmathsetmacro{\qCx}{\qBx+0.4*cos(\qa+\qb+\qc)}
                    \pgfmathsetmacro{\qCy}{\qBy+0.4*sin(\qa+\qb+\qc)}
                    \coordinate (QC) at (\qCx,\qCy,\qHeightThree);
                    \pgfmathtruncatemacro{\qPrefixThree}{\qDa+3*\qDb+9*\qDc}
                    \coordinate (QBallThree-\qPrefixThree) at (QC);
                    \draw[qedge,line width=.36pt] (QB) -- (QC);
                    \node[qvertex] at (QC) {};
                    \foreach \qd/\qDd in {0/0,120/1,-120/2} {
                        \coordinate (QD) at
                            ({\qCx+0.14*cos(\qa+\qb+\qc+\qd)},
                             {\qCy+0.14*sin(\qa+\qb+\qc+\qd)},0);
                        \draw[qedge,line width=.30pt] (QC) -- (QD);
                        \node[qleaf] at (QD) {};
                        \pgfmathtruncatemacro{\qValue}{\qDa+3*\qDb+9*\qDc+27*\qDd}
                        \coordinate (QPoint-\qValue) at (QD);
                    }
                }
            }
        }
        \node[qvertex,minimum size=2.2pt] at (QRoot) {};
        \draw[qedge,line width=.52pt] (QRoot) -- (0,0,{\qHeightZero+1.30});
        \draw[qedge,line width=.52pt,dash pattern=on .57pt off 1.40pt]
            (0,0,{\qHeightZero+1.30}) -- (0,0,{\qHeightZero+2.05});
        \coordinate (QEdgeTarget) at
            ({0.55*2.5*cos(390)},{0.55*2.5*sin(390)},
             {0.45*\qHeightZero+0.55*\qHeightOne});
        \foreach \qValue in {0,9,27,1,28}
            \node[qleaf,minimum size=2.3pt] at (QPoint-\qValue) {};
        \foreach \qNamedPoint in {QBallTwo-0,QBallThree-0,QBallThree-1}
            \node[qvertex,minimum size=2.3pt] at (\qNamedPoint) {};
        \begin{scope}[x={(\qScale cm,0cm)},y={(0cm,\qScale cm)}]
            \node[qlabel,align=left] (QVertexLabel) at (-8.85,4.10)
                {\textit{Vertices} (\(r\in p^{\mathbb Z}\))};
            \node[qlabel] (QEdgeLabel) at (1.10,4.64)
                {\textit{Edges} (\(r\not\in p^{\mathbb Z}\))};
            \node[qlabel] (QLeafLabel) at (-8,-2.8)
                {\textit{Leaves} \(\in\mathbb Q_3\) (\(r=0\))};
            \node[qlabel] (QRootLabel) at (-2.80,3.5) {\(\zeta_{0,1}\)};
            \node[qlabel] (QLcaNine) at (-4.80,2.55) {\(\zeta_{0,1/9}\)};
            \node[qlabel] (QLcaZero) at (-6.80,1.6) {\(\zeta_{0,1/27}\)};
            \node[qlabel] (QZeroLabel) at (-5.75,-.27) {\(0\)};
            \node[qlabel] (QTwentySevenLabel) at (-5.75,-1.00) {\(27\)};
            \node[qlabel] (QNineLabel) at (-4.85,-1.65) {\(9\)};
            \node[qlabel] (QLcaOne) at (4.75,2.74) {\(\zeta_{1,1/27}\)};
            \node[qlabel] (QOneLabel) at (4.80,1.65) {\(1\)};
            \node[qlabel] (QTwentyEightLabel) at (4.55,.70) {\(28\)};
        \end{scope}
        \draw[qleader,shorten <=3pt]
            (QRootLabel.east) -- (QRoot);
        \draw[qleader,shorten <=3pt]
            (QEdgeLabel.south) to[out=220,in=48] (QEdgeTarget);
        \draw[qleader,shorten <=2pt]
            (QLcaNine.south) -- (QBallTwo-0);
        \draw[qleader,shorten <=2pt]
            (QLcaZero.south east) -- (QBallThree-0);
        \draw[qleader,shorten <=2pt]
            (QLcaOne.south west) -- (QBallThree-1);
        \draw[draw=qLeaderGray,line width=.35pt,shorten <=2pt,shorten >=1.5pt]
            (QZeroLabel.north east) -- (QPoint-0);
        \draw[draw=qLeaderGray,line width=.35pt,shorten <=2pt,shorten >=1.5pt]
            (QTwentySevenLabel.north east) -- (QPoint-27);
        \draw[draw=qLeaderGray,line width=.35pt,shorten <=2pt,shorten >=1.5pt]
            (QNineLabel.north east) -- (QPoint-9);
        \draw[draw=qLeaderGray,line width=.35pt,shorten <=2pt,shorten >=1.5pt]
            (QOneLabel.west) -- (QPoint-1);
        \draw[draw=qLeaderGray,line width=.35pt,shorten <=2pt,shorten >=1.5pt]
            (QTwentyEightLabel.north west) -- (QPoint-28);
        \end{tikzpicture}%
        \endgroup%

        \caption{\(\Gamma_3\): the \(p\)-adic convex hull containing \(\mathbb Q_3\)}
        \label{fig:berkovich}
    \end{subfigure}
    \caption{The \(3\)-adics are disconnected (left), but path-connected in the \(p\)-adic hull within the Berkovich affine line (right). Not shown are the infinite set of vertices \(\zeta_{x, p^{-v}}\) from a leaf to e.g. \(\zeta_{0,1}\).}
    \label{fig:q3-berkovich}
    \vspace{0pt}%
\end{figure}

\paragraph{Prior \texorpdfstring{\(p\)}{p}-adic learning.}
Early \(p\)-adic neural networks set weights digit by digit for a single example \citep{albeverio1999padic} or by random search over finite digit prefixes \citep{khrennikov2000learning}.
\citet{bradley2009padic} clusters \(p\)-adic data under a sum-of-norms loss, choosing each cluster center by descending into the most populated branch.
Recent \(p\)-adic optimizers still choose discrete moves: searching finite
digit prefixes \citep{martins2025learning}, modifying coefficient residues
\citep{mihara2026padicpolynomial,mihara2026padiclinear}, or drawing loss-weighted random
increments \citep{zubarev2025adic}.%
Mihara's character-network formulation likewise reduces training to
polynomial feasibility over finite rings \citep{mihara2026padic}.
Other approaches obtain updates through real-valued optimization:
backpropagation in networks organized by non-Archimedean
trees \citep{zuniga2024deep}, or perturbed or rounded
real optimizer steps \citep{nguessan2025vpunns}.
\Cref{app:cost-accounting} describes the recent methods and compares costs in more detail.
In contrast, we operate in an intrinsically defined space $\Gamma_p$ (the tight span of $\Qp$, \Cref{sec:hull}) and show how to 
differentiate 
task losses with respect to its product space in \Cref{sec:optimization}.

\paragraph{Learning on hierarchies and trees.}
Related approaches include hyperbolic embeddings of hierarchies
\citep{nickelKiela2017,chami2020trees} and ultrametric fitting
through real edge weights
\citep{chierchia2019ultrametric,yu2025ultratwd}.
Tropical geometry describes ReLU networks
\citep{zhangNaitzatLim2018} and admits steepest descent in the tropical norm
\citep{talbutMonod2025}; \Cref{sec:forward_propagation} relates both to our radius propagation.
Metric gradient flows and minimizing movements
\citep{mayer1998gradient,ambrosio2008gradient,jordan1998variational},
and subgradient methods on nonpositively curved spaces
\citep{goodwin2026subgradient} have been used to optimize on tree-like spaces like $\Gamma_p$; we discuss these and alternative descent
rules in \Cref{sec:vertex-clipping}.

\section{Analytic Maps on the \(p\)-adic Hull \(\Gamma_p\)}
\label{sec:hull}

A real parameter can move continuously because any two real values are joined by an interval. The field \(\mathbb Q_p\) has no such paths (\Cref{fig:q3}). Hence, we work in the Berkovich affine line \(\mathbb A_{\mathbb Q_p}^{1,\mathrm{an}}\): a canonical, path-connected enlargement of \(\mathbb Q_p\) that gives generalized points over which analytic maps extend uniquely (where convergent) and remain compositional. Appendix~\ref{app:formal-geometry} describes its formal construction as a space of multiplicative seminorms, due to \citet{berkovich1990spectral}. Specifically, within \(\mathbb A_{\mathbb Q_p}^{1,\mathrm{an}}\) we define the \textit{\(p\)-adic hull} \(\Gamma_p\) and show it is the natural connected space for parameters in \(\Qp\).

\subsection{Parameter Space}
\label{sec:parameter-space}

Geometrically, the Berkovich affine ``line'' is an infinite tree, which means every pair of values \(\zeta, \zeta'\in\A_{\Qp}^{1,\mathrm{an}}\) is joined by a unique arc we denote \([\zeta, \zeta']\). For optimization, we restrict individual parameters to the convex hull of $\Qp$ within this tree. For the canonical embedding $\iota : \Qp \hookrightarrow \A_{\Qp}^{1,\mathrm{an}}$, this is
$
  \Gamma_p
  =\bigcup_{x,y\in\mathbb Q_p}[\iota(x),\iota(y)]
$
as a metric space; its infinite subtree is visualized in \Cref{fig:berkovich}.

\paragraph{Points on \(\Gamma_p\).} As we establish
in \Cref{lem:scalar-hull}, we can write \(\Gamma_p\)'s elements as \(\zeta_{x,r}\), indexed by pairs of \(p\)-adic and non-negative real numbers where \textbf{two tuples represent the same point iff they have the same \textit{radius} \(r\) and their \(p\)-adic index $x$ are within this distance}:
\begin{equation}
\label{eq:relation}
\Gamma_p \cong (\Qp \times \R_{\ge0})\ / \left(
\zeta_{x,r} = \zeta_{x',r'} \iff r=r' \ \text{ and } \ |x - x'|_p \le r\right).
\end{equation}
The points in \(\Gamma_p\) corresponding to \(\Qp\) proper are those with zero radius (\(\zeta_{x,0}\)). The radius is the continuous variable \(\Qp\) lacks; a loss that is flat on $x$  (\Cref{sec:background}) can now vary continuously over $r$.

\paragraph{Arcs.} The unique arc \([\zeta_{x,0},\zeta_{y,0}] \subset \Gamma_p\) that connects the hull representatives of \(x,y \in \Qp\) is
\begin{equation}
\label{eq:arc}
\zeta_{x,0} \longrightarrow\text{radius increases}\longrightarrow \zeta_{x,|x-y|_p} = \zeta_{y,|x-y|_p} \longrightarrow\text{radius decreases}\longrightarrow \zeta_{y,0}.
\end{equation}
From \(\zeta_{x,0}\), the radius increases while keeping \(x\) fixed until
\(r=|x-y|_p\), the smallest radius at which \(x\) and \(y\) represent
the same point via the equivalence in \cref{eq:relation}. This point is their
\textit{least common ancestor} (LCA).
Decreasing radius with \(y
\) fixed arrives at $\zeta_{y,0}$
Although radii vary continuously, paths rising from distinct
\(p\)-adic numbers must meet only at radii in \(p^\Z\), because
these are the possible nonzero distances output by \(|x-y|_p\)
(\Cref{sec:background}). These imply an infinite tree
(\Cref{fig:berkovich}):
\begin{itemize}
    \item \textit{\textbf{leaves}}, or \textit{\textbf{ordinary points}}, correspond to \(r=0\), the ordinary \(p\)-adic numbers in \(\Qp\);
    \item \textit{\textbf{open edges}} correspond to \(r > 0\), \(r \not\in p^\mathbb{Z}\) (i.e., the radius is not \(p\) to an integer exponent);
    \item \textit{\textbf{inner vertices}} correspond to \(r \in p^\mathbb{Z}\). It has \(p+1\) outgoing edges: one where the radius increases (moving \textit{up} the tree), and \(p\) where the radius decreases \textit{down} towards the \(\Qp\) points.
\end{itemize}
Choosing one of the \(p\) downward edges at an inner vertex commits to an additional \(p\)-adic digit. For \(p=3\), three numbers sharing the two rightmost digits meet at radius \(3^{-2}\):
\begin{equation}
\label{eq:commitment}
5=\ldots0\mathbf{0}12_3,\quad 14=\ldots0\mathbf{1}12_3,\quad 23=\ldots0\mathbf{2}12_3,\quad \zeta_{5,3^{-2}}=\zeta_{14,3^{-2}}=\zeta_{23,3^{-2}}.
\end{equation}
Below this radius, their paths separate according to the next digit \(0\), \(1\), or \(2\). The upward edge reaches \(r=3^{-1}\), where only the rightmost digit \(2\) remains fixed.

\paragraph{Distances.} Let \(\zeta_{x,R} = \zeta_{y,R}\) be the LCA of \(\zeta_{x,r}, \zeta_{y,s} \in \Gamma_p\). Its radius $R$ equals \(|x - y|_p\) if they are on divergent branches of the tree (\cref{eq:arc}), else one is above the other and $R$ equals \(\max(r,s)\) (the Hsia kernel; \citealp{baker2010potential}).
Therefore, the \textit{arc length} of \([\zeta_{x,r}, \zeta_{y,s}]\) is
\begin{equation}
\label{eq:arc-metric}
d_{\text{arc}}(\zeta_{x,r},\zeta_{y,s})=(R-r)+(R-s)=2\max(|x-y|_p,r,s)-r-s.
\end{equation}
By construction, the metric $d_{\text{arc}}$ on $\Gamma_p$ makes arcs \(
[\zeta_{x,r}, \zeta_{y,s}]
\) the \textit{geodesics} of $\Gamma_p$. Restricted to ordinary points \(\zeta_{x,0}, \zeta_{y,0}\)---those coming from \(x,y \in \Qp\)---this simplifies to \(2|x-y|_p\).  Hence, $(\Qp, |\cdot|_p) \hookrightarrow (\Gamma_p,\frac{1}{2}d_{\text{arc}})$ is an isometric embedding of metric spaces.
In fact $(\Gamma_p,\frac{1}{2}d_{\text{arc}})$ is, up to isometry, the \textit{tight span} (also called the injective or metric envelope) of \((\Qp, |\cdot|_p)\), i.e., it is the ``smallest'' hyperconvex metric space containing \((\Qp, |\cdot|_p)\), making it a canonical choice for $p$-adic optimization. To our knowledge this identification is new; we state and prove it in \Cref{thm:padic-injective-hull}.

\paragraph{Multiple parameters.} Our aggregate parameter space is the Cartesian product
\(\Gamma_p^{\nparams}\) with \textit{parameter states} \(\vtheta := (\theta_1, \dotsc, \theta_{\nparams})\). The induced (\(\ell_2\)) product metric over its components is therefore:
\begin{equation}
\label{eq:product-metric}
d_2(\vtheta,\vtheta') := \sqrt{\sum_{i=1}^\nparams d_{\text{arc}}(\theta_i, \theta_i')^2}.    
\end{equation}
We take the default \(\ell_2\) so that on products of open edges, steepest descent
is the usual negative-gradient update in
arc-length coordinates \citep{boyd2004convex}, and it is understood how to maintain separate adaptive scales
\citep{becigneul2019riemannian}.
The product \(\Gamma_p^d\) contains all classical parameter values but omits mixed seminorms of the full Berkovich affine space, which may offer additional descent directions (\Cref{rem:product-points}); whether they improve optimization remains open.

\subsection{Forward Propagation}
\label{sec:forward_propagation}

We want to optimize parametric models \(\hat{\vy} := f(\vx;\vtheta)\) with
\(\vx \in \mathbb{Q}_p^{\nin}\), \(\vtheta \in \mathbb{Q}_p^{\nparams}\), and
\(\hat{\vy} \in \mathbb{Q}_p^{\nout}\). Fix an input $\vx$ and write \(F := f(\vx;-)\),
decomposed into \textit{stages} \(F=F_K\circ\cdots\circ F_1\) by the structure of the model, with \(F_k:\Qp^{d_{k-1}}\to\Qp^{d_k}\),
where \(d_0=\nparams\) and \(d_K=\nout\). In this notation, a stage ``uses up'' some parameters and propagates intermediate outputs and the remaining parameters; at the end, only final outputs are left.
If each \(F_k\) is analytic---given locally by convergent Taylor expansions---then it has a unique analytification (\Cref{prop:functoriality}), which induces a unique function we write \(\tilde{F}_k:\Gamma_p^{d_{k-1}}\to\Gamma_p^{d_k}\) that extends $F_k$ and has the same Taylor series (\cref{eq:taylor-series}). Hence the stage-by-stage composite \(\tilde{F}_K\circ\cdots\circ \tilde{F}_1\) also extends \(F\), though it does not equal $\tilde{F}$ in general. For \(\vc \in \Q_p^{d_{k-1}}\) and \(\vr \in \R_{\ge 0}^{d_{k-1}}\), write
\(\vzeta_{\vc,\vr} := (\zeta_{c_1, r_1}, \dotsc, \zeta_{c_{d_{k-1}}, r_{d_{k-1}}}) \in \Gamma_p^{d_{k-1}}\). Write \(F_k\) as its coordinate functions \((F_{k,1}, \dotsc, F_{k,d_{k}})\). Let \(a_{\vI}\) be the coefficient of \(\prod_j(z_j-c_j)^{I_j}\) in the Taylor expansion of \(F_{k,i}\) at \(\vc\), where \(\vI=(I_1,\dotsc,I_{d_{k-1}})\) lists nonnegative integer exponents and \(|\vI|:=\sum_j I_j\) (\cref{eq:taylor-series}).
Then \(\tilde{F}_k(\vzeta_{\vc, \vr}) := \vzeta_{\hat{\vc}, \hat{\vr}} \in \Gamma_p^{d_k}\), where \(\hat{\vc} := F_{k}(\vc)\), and for $i = 1, \dotsc, d_k$:
\begin{equation}
\label{eq:pushforward}
\hat{r}_i := \max_{|\vI| \ge 1} |a_{\vI}|_p \prod_{j=1}^{d_{k-1}} r_j^{\,I_j}, \qquad \mathcal{A}_i := \argmax_{|\vI| \ge 1} |a_{\vI}|_p \prod_{j=1}^{d_{k-1}} r_j^{\,I_j}.
\end{equation}
Hence the output radius is a maximum over coefficients scaled by input radius. The multi-indices (equivalently, monomials) $\mathcal{A}_i$ that attain this max are the \textit{active terms} at the $i$-th output of input $\vzeta_{\vc,\vr}$.
For example, if $p=2$ and \(F(\theta_1,\theta_2)\) expanded at $(0,0)$ is \(1 + \theta_1-\frac12\theta_2^2\), then the cases are
\begin{equation}
\tilde{F}(\zeta_{0,r_1}, \zeta_{0,r_2}) = \zeta_{1,\hat{r}} = \zeta_{1,\max(r_1,2r_2^2)},\quad
\begin{array}{@{}ll@{}}
\hat r = r_1>2r_2^2: & \mathcal A=\{(1,0)\} = \{\theta_1\},\\[1pt]
\hat r = 2r_2^2 > r_1: & \mathcal A=\{(0,2)\} = \{\theta_2^2\},\\[1pt]
\hat r = r_1=2r_2^2: & \mathcal A=\{(1,0),(0,2)\} = \{\theta_1, \theta_2^2\}.\\
\end{array}
\end{equation}
The output $\vzeta_{\hat{\vc}, \hat{\vr}}$ is well-defined independent of the input
representative \(\vc\), and the expression for $\hat{r}_i$ comes from the Gauss seminorm of \(F_{k,i} - F_{k,i}(\vc)\), proved in
\Cref{cor:pushforward}.
For convergent analytic maps, \(\mathcal A_i\) is finite whenever \(\hat r_i>0\).

Evaluation of \cref{eq:pushforward} can proceed stage by stage and is exact from $\vc \to \hat\vc$.
As Berkovich points however, this propagation through $\Gamma_p^{d_0}, \Gamma_p^{d_1}, \dotsc$ can be lossy if a parameter reaches a stage along two paths; then, the composed $\hat{\vr}$ is only an upper bound (\Cref{ex:coordinate-marginals})---\(u-u\) should propagate as radius zero, but as independent stage inputs it does not. This is the \textit{dependency problem} of interval arithmetic \citep{dawood2023automatic}.
Given the computed \(p\)-adic coefficients \(a_{\vI}\), taking logarithms in \cref{eq:pushforward} gives a tropical max-plus operation on log-radii. Propagation through this operation is classical: ReLU networks are tropical rational maps \citep{zhangNaitzatLim2018}, whose backward pass follows the attaining terms as Viterbi backtracking does \citep{viterbi1967error}, and \citet{talbutMonod2025} minimize tropical location problems over \(\R\) by steepest descent in the tropical norm.

Merging stages can tighten the radius bound, at the cost of expanding larger analytic expressions.
Finally, a per-example loss \(\ell(\hat y,y)\) comparing an output \(\hat y\in\Qp^{d_K}\) with its target \(y\) must also be extended. The scalar distance \(\ell(\hat{y}, y) = |\hat{y} - y|_p\) has a canonical extension $\frac{1}{2}d_{\text{arc}}$ on $\Gamma_p$ (\cref{eq:arc-metric}; \Cref{thm:padic-injective-hull}). Substituting $\zeta_{\hat{y},\hat{r}}$ for the prediction and $\zeta_{y,0}$ for the reference gives what we call the \textit{direct loss}%
\begin{equation}
\label{eq:direct_loss_def}
\ell_{d}(\zeta_{\hat{y}, \hat{r}}, y) := \frac{1}{2} d_{\text{arc}} (\zeta_{\hat{y}, \hat{r}}, \zeta_{y, 0}) = \max(|\hat{y} - y|_p, \hat{r}) - \frac{\hat{r}}{2},
\end{equation}
a canonical continuous relaxation whose \(-\hat r/2\) term makes it fall as the radius grows even while the maximum is constant (\Cref{fig:descent}).

\section{Continuous Optimization}
\label{sec:optimization}
\label{sec:bgd}

\subsection{Gradient Descent}
\label{sec:gradient_descent}
\label{sec:vertex-clipping}

Continuous gradient descent in a metric space involves proceeding down
the steepest descent direction in the metric tangent cone
\(T_{\vtheta}\Gamma_p^d\) at a parameter state \(\vtheta \in \Gamma_p^d\) for a real-valued duration. This cone consists of velocities $\vv := (\vq, \vs)$, with per-coordinate \textit{directions} $q_i \in \Sigma_{\theta_i} \Gamma_p$ and speeds $s_i \in \R_{\ge 0}$. The available directions $\Sigma_{\theta_i} \Gamma_p$ at $\theta_i$ are \textit{up} $+$, \textit{down} $-$ (on open edges), and \textit{down via} $0,\dotsc,p-1$ (at inner vertices). An available direction $q_i$ at $\theta_i = \zeta_{c,r}$ has a unit-speed path $\gamma_{i, q_i} : [0,\delta_i] \to \Gamma_p$:
\begin{align}
\label{eq:unit-paths}
    \gamma_{i,+}(h) := \zeta_{c,r+h}, \quad \gamma_{i,-}(h) := \zeta_{c,r-h}, \quad \gamma_{i,\rho}(h) := \zeta_{c+\rho p^{n},\, r-h} \text{ for } \rho\in \{0,\dotsc,p-1\},
\end{align}
where $\delta_i$ is the earliest $h$ from which a unique geodesic is no longer determined, i.e., the next vertex: from an inner vertex, $\delta_i = r(p-1)$ for $\gamma_{i,+}$ and $\delta_i = r(1-1/p)$ for each $\gamma_{i,\rho}$.
The $p$ downward paths at inner vertices represent commitment to the digit \(a_n=\rho\) (\cref{eq:commitment}), where \(r=p^{-n}\) and \(c\) is taken with \(a_n=0\); a more formal treatment is in \Cref{def:metric-tangent-cone}.

Altogether, we can define a path $\vgamma_{\vv}(h) := (\gamma_{1, q_1}(s_1h), \dotsc, \gamma_{d,q_d}(s_dh))$ which starts at $\vgamma_{\vv}(0) = \vtheta$. Under our product metric (\cref{eq:product-metric}), this path is unit-speed when $\|\vv\|_2 := \sqrt{\sum s_i^2} = 1$.
The \textit{one-sided directional
derivative} of \(L:\Gamma_p^d\to\R\) with respect to a velocity $\vv \in T_{\vtheta}\Gamma_p^d$ is
\begin{equation}
D_{\vv} L(\vtheta) := \lim_{h \downarrow 0}
\frac{L(\vgamma_{\vv}(h)) - L(\vtheta)}{h}.
\end{equation}
Write \(\ve_{q_i}\) for the unit velocity with \(s_i=1\) and \(s_j=0\) for \(j\neq i\); only \(\theta_i\) moves, and we call \(D_{\ve_{q_i}}L\) the \textit{slope} of \(L\) at \(\theta_i\) in the direction \(q_i\).
To find the steepest joint direction, we take a unit velocity that minimizes this derivative, then
take a step of that size times a \textit{learning rate} $\alpha_t > 0$:
\begin{equation}
\label{eq:gd}
\vv^*\in\text{argmin}_{\|\vv\|_2=1}
 D_{\vv}L(\vtheta_t),
\qquad
\vtheta_{t+1} := \vgamma_{\vv^*}\bigl(-\alpha_t D_{\vv^*} L(\vtheta_t)\bigr).
\end{equation}
This is the metric space analogue of a Euclidean gradient update $\vtheta_{t+1} = \vtheta_{t} - \alpha_t\nabla L (\vtheta_{t})$. \Cref{fig:descent} follows one parameter through such updates.
We cap the step when the first coordinate reaches a vertex, where the tangent cone changes and slopes must be recomputed. This \textit{vertex clipping} extends \(\vgamma_{\vv^*}\) constantly outside its domain \([0,\min_{i:s_i>0}\delta_i/s_i]\) (\cref{eq:unit-paths}). When all directional derivatives are nonnegative, \cref{eq:gd} therefore gives \(\vtheta_{t+1}=\vtheta_t\).

\begin{figure}[t]
\tikzset{every picture/.append style={xscale=0.9, yscale=0.7}}
\centering
\begin{tikzpicture}[x=0.95cm,y=0.8cm,font=\small,
  edge/.style={gray!70,thick},
  faint/.style={gray!35,thick},
  path/.style={blue!70!black,line width=1.5pt,-{Stealth[length=7pt,width=6pt]}},
  pathend/.style={blue!70!black,line width=1.5pt},
  vert/.style={circle,draw=gray!70,fill=white,inner sep=1.1pt},
  tinyv/.style={circle,fill=gray!70,inner sep=0.6pt},
  stepdot/.style={circle,fill=blue!70!black,inner sep=1.4pt},
  num/.style={blue!70!black,inner sep=1pt,font=\scriptsize},
  lab/.style={align=center,inner sep=1pt},
  ax/.style={gray!60},
  axl/.style={gray!80,inner sep=1.5pt}]

\draw[ax,-{Stealth[length=5pt]}] (-0.5,0) -- (-0.5,3.55) node[above,axl] {$r$};
\foreach \y/\lab in {0/$0$,1/$3^{-3}$,2/$2\cdot3^{-3}$,3/$3^{-2}$}
  \draw[ax] (-0.42,\y) -- (-0.58,\y) node[left,axl] {\lab};

\draw[edge,dashed] (3,3) -- (3,3.5);
\foreach \x in {1,3,5} \draw[edge] (3,3) -- (\x,1);
\foreach \x in {1,3,5} {
  \draw[edge] (\x,1) -- (\x-0.45,0.3333) -- (\x-0.60,0.1111) -- (\x-0.65,0.0370) -- (\x-0.675,0);
  \draw[faint] (\x,1) -- (\x,0.3333) (\x,1) -- (\x+0.45,0.3333);
  \draw[faint] (\x-0.45,0.3333) -- (\x-0.45,0.1111) (\x-0.45,0.3333) -- (\x-0.30,0.1111);
  \node[tinyv] at (\x-0.45,0.3333) {};
  \node[tinyv] at (\x-0.60,0.1111) {};
}
\node[vert] at (3,3) {};
\foreach \x in {1,3,5} \node[vert] at (\x,1) {};
\node[below=2pt,lab] at (0.325,0) {$5$\\[-2pt]{\scriptsize$\ldots0\mathbf{0}12_3$}};
\node[below=2pt,lab] at (2.325,0) {$14$\\[-2pt]{\scriptsize$\ldots0\mathbf{1}12_3$}};
\node[below=2pt,lab] at (4.325,0) {$y=23$\\[-2pt]{\scriptsize$\ldots0\mathbf{2}12_3$}};
\node[lab,align=left,anchor=east] (L) at (2.35,3.3) {LCA of $5,14,23$\\[-1pt]{\scriptsize digits $\ldots\mathbf{12}_3$ fixed}};
\draw[ax] (L.east) -- (2.9,3.03);

\draw[path] (3,2) -- (3,2.93);
\draw[path] (3.07,2.93) -- (4.3333,1.6667);
\draw[path] (4.3333,1.6667) -- (4.96,1.04);
\draw[pathend] (5,1) -- (4.55,0.3333) -- (4.40,0.1111) -- (4.35,0.0370) -- (4.325,0);
\node[stepdot] at (3,2) {};
\node[stepdot] at (3,3) {};
\node[stepdot] at (4.3333,1.6667) {};
\node[stepdot] at (5,1) {};
\node[stepdot,inner sep=1.0pt] at (4.55,0.3333) {};
\node[stepdot,inner sep=0.8pt] at (4.40,0.1111) {};
\node[num,left=2pt]  at (3,2) {start};
\node[num,above right=1pt] at (3.08,3.02) {A};
\node[num,right=3pt] at (4.3333,1.72) {B};
\node[num,right=3pt] at (5,1.05) {C};
\node[num,right=3pt] at (4.55,0.36) {D};

\begin{scope}[shift={(7.0,0)},x=1.05cm]
\draw[ax,-{Stealth[length=5pt]}] (0,0) -- (0,3.55);
\draw[ax,-{Stealth[length=5pt]}] (0,0) -- (4.6,0) node[right,axl] {$s$};
\foreach \y/\lab in {1/$3^{-3}$,2/$2\cdot3^{-3}$,3/$3^{-2}$}
  \draw[ax] (0.1,\y) -- (-0.1,\y) node[left,axl] {\lab};
\foreach \x/\lab in {1/$3^{-3}$,2/$2\cdot3^{-3}$,3/$3^{-2}$,4/$4\cdot3^{-3}$}
  \draw[ax] (\x,0.08) -- (\x,-0.08) node[below,axl,font=\scriptsize] {\lab};
\node[axl,anchor=north] at (2.3,-0.55) {distance travelled $s$};
\draw[red!70!black,thick,dashed] (0,3) -- (1,3) -- (1,0.03) -- (4,0.03);
\node[red!70!black,anchor=west,inner sep=2pt] at (1.05,2.75) {$|c-y|_3$};
\draw[blue!70!black,line width=1.3pt] (0,2) -- (4,0);
\node[blue!70!black,anchor=west,inner sep=2pt] at (1.35,1.9) {$\ell_d=\max(|c-y|_3,r)-\tfrac r2$};
\node[stepdot] at (0,2) {};
\node[stepdot] at (1,1.5) {};
\node[stepdot] at (2.3333,0.8333) {};
\node[stepdot] at (3,0.5) {};
\node[stepdot,inner sep=1.0pt] at (3.6667,0.1667) {};
\node[stepdot,inner sep=0.8pt] at (3.8889,0.0556) {};
\node[num,above right=1pt] at (0,2) {start};
\node[num,below left=1pt]  at (1,1.5) {A};
\node[num,below left=1pt]  at (2.3333,0.8333) {B};
\node[num,below left=1pt]  at (3,0.5) {C};
\node[num,above right=1pt] at (3.6667,0.1667) {D};
\end{scope}
\end{tikzpicture}
\caption{Left: one parameter learning \(y=23\) on \(\Gamma_3\) from \(\zeta_{14,\,2\cdot3^{-3}}\) with learning rate \(8\cdot3^{-4}\)
; steps are clipped at vertices (\Cref{sec:vertex-clipping}). The line toward \(y=23\) represents infinitely many clipped updates, with radius tending to zero. Right: along that path, the center's \(p\)-adic error \(|c-y|_3\) changes only when a digit is chosen, while the direct loss \(\ell_d\) falls throughout.}
\label{fig:descent}
\end{figure}

Other conventions are possible. A retraction-based update \citep{boumal2019global} specifies a continuation beyond the vertex. Spending the step's remainder after recomputing slopes at the vertex is a heuristic integration of the metric gradient flow, and still misses slope changes within an edge. That flow can instead be discretized implicitly, minimizing \(L(\vtheta')+d_{\mathrm{arc}}(\vtheta',\vtheta_t)^2/(2\tau)\) over \(\vtheta'\in\Gamma_p^d\) with time step \(\tau\) \citep{mayer1998gradient,ambrosio2008gradient,jordan1998variational}; our analytic extensions give directional derivatives directly, so we take explicit steps. The minimization in \cref{eq:gd} is not of this kind: it is over the tangent cone, where
\(D_{\vv}L\) is piecewise linear in the speeds (\Cref{sec:coordinate-implementation}). Subgradient methods on nonpositively curved spaces \citep{goodwin2026subgradient} assume geodesic convexity, which our regression loss lacks even for affine models (\Cref{rem:affine-nonconvexity}).

\label{sec:optimizers}

\textbf{\textit{Momentum}} in Euclidean space averages past gradients, \(m_t=\beta m_{t-1}+(1-\beta)\nabla L(\vtheta_t)\) and steps along \(-m_t\) in place of \(-\nabla L(\vtheta_t)\). On \(\Gamma_p\) a coordinate has one slope per available direction, the set of which changes at each vertex.
For a single coordinate write \(\ve_q\) for its unit velocity in direction \(q\).
We average \(D_{\ve_+}L(\vtheta_t)\) and \(D_{\ve_-}L(\vtheta_t)\) over every update of the coordinate; at a vertex, where direction \(-\) does not exist, we use the smallest child slope \(\min_\rho D_{\ve_\rho}L(\vtheta_t)\) in its place. The amount by which child \(\rho\) exceeds it, \(\Delta_t(\rho)=D_{\ve_\rho}L(\vtheta_t)-\min_{\rho'}D_{\ve_{\rho'}}L(\vtheta_t)\ge0\), is averaged only over the updates made at that vertex. Write \(u_t,d_t\) for the upward and downward averages and \(\bar\Delta_t(\rho)\) for the child's average excess. The momentum estimates are \(m_t(+)=u_t\), \(m_t(-)=d_t\), and \(m_t(\rho)=d_t+\bar\Delta_t(\rho)\).

The speed \(s_t(q)=\max(0,-m_t(q))\) takes the place of \(-D_{\vv^*}L\) in \cref{eq:gd}.
Set \(s_t(\rho)=0\) if the child's directional derivative, averaged over updates at its vertex, is nonnegative. On an edge where \(L\) is differentiable, \(D_{\ve_-}L=-D_{\ve_+}L\). If the incoming histories also satisfy \(d_{t-1}=-u_{t-1}\), the recurrences preserve \(d_t=-u_t\) and reduce to Euclidean momentum.

\textbf{\textit{Adam}} divides speed by the root of a second-moment average. We keep one scalar second moment per parameter, so the division rescales every direction of a coordinate alike and cannot change which one is chosen, as separate scales would; \citet{cho2017riemannian} assign one adaptive rate per weight vector for the same reason.

See \Cref{app:adaptive-updates} for all averaging, second-moment, and bias correction equations. Both extensions are equivariant in
distribution under factorwise tree isometries and coordinate permutations
(\Cref{prop:adaptive-equivariance}) and keeps unselected coordinates' data fixed.

\subsection{Approximation via Parameter Groups}
\label{sec:coordinate-implementation}

At an inner vertex a parameter has \(p+1\) branch choices, so naive joint selection over \(d\) parameters enumerates up to \((p+1)^d\) combinations. Two coordinates are \emph{coupled} by an output when its contribution to \(D_{\vv}L\) cannot be written as a function of one plus a function of the other. When several terms attain the maximum in \cref{eq:pushforward}, \((|\mathcal A_i|>1)\), the rate in \cref{eq:chain-rule} is a maximum over them and couples the input coordinates those terms involve, the \(j\) with \(I_j>0\) (\Cref{ex:tie-couples}).

Even a single active term can couple parameters. For example,
\(\theta_1\theta_2\) has only one term when expanded at centers
\((0,0)\). From the Gauss points
\(\theta_1=\theta_2=\zeta_{0,1}\), moving either parameter alone
down toward digit \(1\) leaves their product unchanged, whereas
moving both moves the product down toward digit \(1\).
At the child centers \((1,1)\), the expansion becomes
\((1+u)(1+v)=1+u+v+uv\). The output radius is then
\(\max(r_1,r_2)\): its derivative is zero when either parameter
moves alone, but negative when both move down together.
\textbf{Hence, the joint radius derivative cannot in general be written
as a sum of coordinate derivatives within a coupled group.}

Two parameters are in the same \emph{coupled parameter group} when a chain of couplings, over any outputs and in-batch examples, connects them. Each contribution then depends on one group only, so \(D_{\vv}L=\sum_{G\in\mathcal G}\psi_G(\vv_G)\), with \(\vv_G\) the velocity restricted to \(G\). Enumeration is linear in the number of groups \(\mathcal G\).
An output whose contribution is locally constant, such as a logit with \(|\hat c_k|_p>\hat r_k\) (\Cref{sec:classification}), contributes zero to \(D_{\vv}L\).
For affine models each term has one coordinate, so coupling comes only from ties, and the groups are read off the forward pass in linear time. The affine implementation can further split these groups when contributions cancel exactly (\Cref{app:directional-groups,app:implementation_details}).%

Enumeration can still be exponential per group,
so each update proceeds as follows:
\begin{enumerate}[nosep,leftmargin=*]
\item Select one coordinate per group uniformly among those awaiting a turn (\S\ref{app:optimizer-updates}).
\item On a common minibatch, compute each selected coordinate's directional slopes. For gradient descent, choose a minimizing direction with speed \(\max(0,-\mu_G)\), where \(\mu_G\) is its least unit-direction slope. This suffices because derivatives add across groups (\S\ref{app:coordinate-restriction}).
\item Move the selected coordinates together under the clipping rule. A stationary selection still counts as a turn.
\end{enumerate}
With fixed groups this is a random-permutation sweep~\citep{lee2019randompermutations}, as in parallel coordinate descent~\citep{liu2015asynchronous}; a single group gives ordinary coordinate descent.

\subsection{Backpropagation}
\label{sec:backward-propagation}

To choose a parameter direction we need its effect on the loss, which one can compute stagewise via backpropagation \citep{linnainmaa1976taylor}. Since directional derivatives add over examples, fix one and write \(L=\ell\circ F\), with \(F=F_K\circ\cdots\circ F_1\) as in \Cref{sec:forward_propagation} and \(\ell\) real-valued. Each stage maps the directions and speeds of its inputs to those of its results.
An input radius at stage \(k\) moves at \(D_{\vv}r_j=\pm s_j\) (\(+\) up, \(-\) down or down via a digit).
For each selected child direction, form the expansion and active terms in \cref{eq:pushforward} using the center \(c_j+\rho_jp^{n_j}\) of that child (\cref{eq:unit-paths}; any representative gives the same result, \Cref{rem:rep-invariance}). Differentiating with these representatives fixed gives the rates \(D_{\vv}\hat{\vr}\); working backwards from the loss head (\Cref{prop:directional-chain-rule}), for \(k=K,\dotsc,1\):
\begin{equation}
\label{eq:chain-rule}
D_{\vv}\bigl(\ell\circ\tilde F_K\circ\cdots\circ\tilde F_k\bigr)
= D_{\hat{\vv}}\bigl(\ell\circ\tilde F_K\circ\cdots\circ\tilde F_{k+1}\bigr),
\qquad
D_{\vv}\hat r_i=\hat r_i\max_{\vI\in\mathcal A_i}\sum_j \frac{I_j\,D_{\vv}r_j}{r_j},
\end{equation}
with \(\mathcal A_i\) the active terms and $\hat{r}_i$ the output radius of \cref{eq:pushforward}.
For positive input radii, the radius-rate formula only applies when \(\hat r_i>0\); when \(\hat r_i=0\), the radius derivative is zero.
The resulting velocity \(\hat{\vv}=(\hat{\vq},\hat{\vs})\) has speeds \(\hat s_i=|D_{\vv}\hat r_i|\). A positive rate moves up, and a negative rate moves down toward \(F_{k,i}(\vc')\), where \(\vc'\) is \(\vc\) with the selected child centers substituted (\Cref{app:phad}).
When each \(\mathcal A_i\) has one term, \cref{eq:chain-rule} is linear in the input rates, as in backpropagation through a linear layer; ties are treated in \Cref{sec:coordinate-implementation}.
The stage-by-stage extension is exact in every center and a bound in every radius (\Cref{sec:forward_propagation}); the result equals \(D_{\vv}(\ell\circ\tilde F)\) when no parameter reaches a stage along two paths. Weighted over examples, these are the derivatives \cref{eq:gd} requires for the loss evaluated on the stagewise output.

\paragraph{Affine models.} For affine models, the ordinary Jacobian%
approximation \(F(\vc)+J_F(\vc)(u-\vc)\) equals \(F\), so the same
disk extension and loss give the original directional derivatives for both
linear regression and classification. At a vertex, each output scalar has a
common child derivative and at most one exception, so accumulating these with
the signed loss-derivative weights assembles the \(p\) child slopes in
\(O(m+p)\) rather than \(O(mp)\) work for \(m\) contributing outputs.
We may view this as ``Jacobian descent'': select the gradient step using
the loss composed with the Jacobian approximation. See
Appendix~\ref{app:coincidence} for the formulas and
Appendix~\ref{app:evaluation-cost} for the cost analysis.

\section{\texorpdfstring{\(p\)}{p}-adic Regression}
\label{sec:regression}

We minimize mean \(p\)-adic error, or \(L_1\) loss,
following \citet{baker2022number,baker2025linear} and \citet{zubarev2025adic}; exact minimization is NP-hard for \(p=2\) when the number of coefficients is part of the input \citep{baker2026nphardness}.
The \(L_\infty\) loss instead minimizes worst-case error, as in \citet{martins2025learning}.
Squared \(L_2\) loss is also possible, but the usual least-squares justification does not carry over: the \(p\)-adic norm is not induced by an inner product, so its square gives neither a quadratic objective nor a closed-form solution and remains locally constant at nonzero residuals.
For training, we use the \textit{mean direct loss} $L_d$, the continuous relaxation from \cref{eq:direct_loss_def}:
\begin{equation}
L_1(\vtheta) = \frac1\nex\sum_i|f(\vx^{(i)};\vtheta)-y^{(i)}|_p \quad\longrightarrow\quad L_d(\vtheta)=\frac1\nex\sum_i\ell_d(f(\vx^{(i)};\vtheta),y^{(i)}).
\end{equation}

\begin{table}[t]
\centering
\small
\caption{Affine and two-layer regression under 3-adic \(L_1\) loss (mean absolute error), with training-example equivalents needed for recovery at depths 4 and 5. Entries are mean \(\pm\) SD over five coefficient instances; NR means $x$/5 recovered. N/A: beam search always fixes digits from the coarsest downward. \textbf{Bold} results overlap SDs with the best.
}
\label{tab:regression}
\resizebox{\linewidth}{!}{%
\setlength{\tabcolsep}{2pt}
\begin{tabular}{@{}lccc@{\hspace{8pt}}ccc@{\hspace{8pt}}ccc@{}}
\toprule
\multirow{3}{*}{\textbf{Optimizer}}
 & \multicolumn{3}{c}{\textbf{Centers initialized at zero}}
 & \multicolumn{3}{c}{\textbf{Adverse start}}
 & \multicolumn{3}{c}{\textbf{Two-layer target}} \\
\cmidrule(lr){2-4}\cmidrule(lr){5-7}\cmidrule(l){8-10}
 & \multirow{2}{*}{\shortstack{Test \(L_1\) loss\\(\(\log_3\))}} & \multicolumn{2}{c}{Samples\(/N\)}
 & \multirow{2}{*}{\shortstack{Test \(L_1\) loss\\(\(\log_3\))}} & \multicolumn{2}{c}{Samples\(/N\)}
 & \multirow{2}{*}{\shortstack{Test \(L_1\) loss\\(\(\log_3\))}} & \multicolumn{2}{c}{Samples\(/N\)} \\
 & & depth 4 & depth 5 & & depth 4 & depth 5 & & depth 4 & depth 5 \\
\midrule
Beam (selected width) & \(\mathbf{-5.26 \pm 0.01}\) & \(27 \pm 0\) & \(36 \pm 0\) & \multicolumn{3}{c}{N/A} & \(-5.25 \pm 0.01\) & \(96 \pm 34\) & \(127 \pm 45\) \\
GD (approx.) & \(\mathbf{-5.27 \pm 0.01}\) & \(\mathbf{0.8 \pm 0.1}\) & \(\mathbf{1.0 \pm 0.1}\) & \(-3.27 \pm 2.74\) & \(\mathrm{NR}\;(3/5)\) & \(\mathrm{NR}\;(3/5)\) & \(\mathbf{-5.26 \pm 0.02}\) & \(\mathbf{1.1 \pm 0.2}\) & \(\mathbf{1.5 \pm 0.2}\) \\
+ Momentum & \(\mathbf{-5.27 \pm 0.01}\) & \(\mathbf{0.8 \pm 0.0}\) & \(\mathbf{1.0 \pm 0.1}\) & \(\mathbf{-5.27 \pm 0.01}\) & \(\mathbf{2.9 \pm 0.3}\) & \(\mathbf{3.1 \pm 0.3}\) & \(\mathbf{-5.27 \pm 0.01}\) & \(\mathbf{3.7 \pm 4.4}\) & \(\mathbf{6.7 \pm 6.9}\) \\
+ Adam & \(\mathbf{-5.27 \pm 0.01}\) & \(\mathbf{0.8 \pm 0.1}\) & \(\mathbf{1.0 \pm 0.1}\) & \(\mathbf{-5.27 \pm 0.01}\) & \(\mathbf{2.7 \pm 0.1}\) & \(\mathbf{2.9 \pm 0.2}\) & \(\mathbf{-5.26 \pm 0.02}\) & \(\mathbf{3.0 \pm 2.5}\) & \(\mathbf{4.8 \pm 4.7}\) \\
\bottomrule
\end{tabular}%
}
\end{table}

\paragraph{Experiments.}
We evaluate on a 3-dimensional linear regression task in \(\Q_3\). For each seed, we draw \(\vtheta^*\) and each input vector \(\vx^{(i)}\) uniformly
from \(\{0,\ldots,3^8-1\}^3\) and generate targets as
\(y^{(i)}={\vtheta^{*\top}}\vx^{(i)}+3^5\eta^{(i)}\). Let \(\eta^{(i)}\) be a uniform integer in \(\{1, \dots, 99\}\), representing noise at depth five. We also train a two-layer model on four-input targets generated from
\(f(\vx)=v_1(x_1+w_1x_2)^2+v_2(x_3+w_2x_4)^2\)
with the same noise.
Both gradient training and beam search use this four-coefficient model;
all four gradient parameters start at the Gauss point (\Cref{app:two-layer-target}). 512 train/dev/test examples are drawn, and training uses batches of 32. At the true coefficients the expected \(L_1\) loss is
\(-5.26\) on the \(\log_3\) scale. We report \(L_1\) error using the
representatives retained by training, with their radii set to zero
(\(L_1\) is not representative-invariant).

We compare approximate gradient descent (GD), momentum, and Adam with \citet{martins2025learning}'s classifier beam search (their Algorithm~1), adapted to regression with widths five and ten and batch-32 scoring (\Cref{app:training-protocol,app:regression-beam}); their own regression method, which is for $L_\infty$, is not run. %
For each coefficient instance, learning rates and beam widths are selected by work to first depth-five agreement on every independent validation observation of the same function (\Cref{app:training-protocol}). Training work counts repeated uses of examples: each gradient update or
beam-candidate scoring on a batch of \(b\) examples contributes \(b/512\),
regardless of how many coordinates move. Arithmetic costs are given in
\Cref{app:evaluation-cost}.

For affine regression, our gradient methods place every
coefficient at the \textit{Gauss point} \(\zeta_{0,1}\), the disk of radius one
around zero and the uninformed state of the tree; beam search starts with every
coefficient at zero. We also test an \emph{adverse} start: a digitwise local
minimum, at which no change of any single digit lowers the training loss, lifted
to disks of radius \(3^{-8}\) so that radius growth can revisit coarser digits
(\Cref{app:regression-protocol}).

\paragraph{Results.}

See \Cref{tab:regression}. Recovery through depth \(q\) means
\(|c_j-\theta_j^*|_p\le p^{-q}\) for every coefficient \(j\). With centers initialized at zero, beam search and the gradient
descent methods all reach test losses close to the loss at the true
coefficients. %
With batches of 32, the gradient methods reach depth 5 after processing 0.95--0.98 training-example equivalents on average, compared with 35.5 for beam search of width five. With the engineered adversarial start, gradient methods often (but not always) escaped the local minima. Here, plain GD (approx.) recovered on three of five seeds, while Momentum and Adam recovered on all five.

For the two-layer network, recovery is over the coefficients of \(f\) as a
quadratic form. In contrast, a bias-free linear control trained by GD on the same data at \(\kappa=1\)
has test loss \(-0.49\pm0.16\), against \(-0.36\pm0.19\) for predicting zero. %
GD recovered on all five seeds in \(1.45\pm0.25\)
training-example equivalents, with test loss close to the loss at the true
coefficients; Momentum and Adam also recovered all five. Beam search recovers
all five with the selected widths; width five alone misses one
(\Cref{app:two-layer-target}).
On the missed seed the target \(v_1\) is divisible by nine, so a candidate
matching it has \(v_1=0\) after two digits; its term \(v_1(x_1+w_1x_2)^2\) is
then zero whatever its \(w_1\), and nine candidates share one loss exactly.
At the next digit these nine separate and take the nine lowest losses, with
the one matching the target last of them, so width five keeps only the
first five.

\section{\(p\)-adic Classification}
\label{sec:classification}

In Euclidean machine learning, a linear binary classifier assigns a
label by testing whether \(f(\vx;\vw)=\vw^\top\vx+b\ge0\).
Over \(\Qp\), the output $p$-adic values have no coherent ordering \Cref{sec:background}. Hence, \citet{martins2025learning} defined
\textit{\(p\)-adic linear classifiers} as
\(|\vw^\top\vx+b|_p\le1\)---equivalently, \(v_p(\vw^\top\vx+b)\ge0\).
\citeauthor{martins2025learning} trained with beam search; a similar 2-adic threshold unit, trained by random search, appears in \citet{khrennikov2000learning}.
We extend this to multiclass classification by defining softmax which we optimize with
gradient descent in $\Gamma_p^d$, resolving their Open Problem 6.1 (linear multiclass \(p\)-adic classifiers) and addressing 6.2 (gradient-based optimization).

\subsection{Logits, Softmax, and Loss}

For \(C\) classes, stack affine maps into
\(f(\vx)=\mW\vx+\vb\in\Qp^C\), where row \(k\) of \(\mW\) is
\(\vw_k^\top\) and \(f_k(\vx; \vw_k, b_k)=\vw_k^\top\vx+b_k\).
Generalizing Martins' binary classifier, our goal at inference is to have the real-valued \(z_k=v_p(f_k(\vx))=-\log_p|f_k(\vx)|_p\) act as logit
for a softmax on \(\vz=(z_1,\ldots,z_C)\).

During training, the map for class $k$ extends to $\tilde{f}_k(\vx; \vw_k, b_k)$ whose aggregate parameter state is in $\Gamma_p^{\nparams/C}$.
Writing its output as \(\zeta_{\hat c_k,\hat r_k}\),
we take its \textit{logit} to be the negative base-\(p\) logarithm of the radius of
its LCA with zero, an upper bound on \(|u|_p\) over the representatives \(u\) of the output (\cref{eq:relation}):
\(z_k=-\log_p\max\{|\hat c_k|_p,\hat r_k\}\). When \(|\hat c_k|_p>\hat r_k\) this is \(v_p(\hat c_k)\), unchanged by the radius; else it is \(-\log_p\hat r_k\).
The direct loss (\cref{eq:direct_loss_def}) is a generalization of distance; in contrast, \(|f_k(\vx)|_p\le1\) is a condition on \textit{all} representatives, which \(z_k\ge0\) guarantees all satisfy.

Converting the valuation logits to natural-log
units, applying softmax, and simplifying gives the class probabilities and cross-entropy
loss per example $i$ and in aggregate $L_\text{CE}$:
\begin{equation}
\pi_k^{(i)}=
\frac{\bigl(\max\{|\hat c_k^{(i)}|_p,\hat r_k^{(i)}\}\bigr)^{-1/T}}
{\sum_{k'}\bigl(\max\{|\hat c_{k'}^{(i)}|_p,\hat r_{k'}^{(i)}\}\bigr)^{-1/T}},
\qquad
\ell^{(i)}_{\mathrm{CE}}=-\ln\pi^{(i)}_{y^{(i)}},
\qquad
 L_\text{CE} = \frac{1}{\nex}\sum_i \ell^{(i)}_\text{CE}
\end{equation}
where $T$ is softmax temperature, giving probabilities \(\operatorname{softmax}((\ln p)z^{(i)}/T)\).
We use \(T=1\) throughout. The binary
formulation appears in \Cref{sec:quillian}.
For \Cref{tab:modulo,tab:quillian}, we extract \(p\)-adic parameters from the learned parameter states using the convention in \Cref{app:classification-point-evaluation}.
For a finite \(p\)-adically separable training set and a classifier with independent output biases, \Cref{thm:margins} constructs states with fixed separating centers, positive parameter radii, and any prescribed common output radius \(\rho\in(0,1]\). Every correct-class logit then exceeds every other by at least \(1\).

\subsection{Experiments}

\paragraph{Modulo Classification.}

For \(x \bmod p^k\), inputs with the same remainder share a base-\(p\) prefix. %
A closed-form affine solution follows by applying \citet[\S 3.1, Proposition~2]{martins2025learning} to each residue class, but the task was not trained there. Our initialization selects the weight's depth from the training labels but is identical for every class, so it predicts at chance; we test whether training finds the class biases that complete such a solution, under our cross-entropy objective with valuation logits; further setup in \Cref{app:training-protocol,app:modulo-protocol}.

\begin{table}[t]
\centering
\small
\caption{Modulo classification test accuracy (mean \(\pm\) SD over 5 seeds). All models use Adam (Euclidean, or ours for $p$-adic) with batch size 32.
}
\label{tab:modulo}
\footnotesize
\begin{tabular}{@{}lcc@{}}
\toprule
\textbf{Representation / model prime}
  & \textbf{Task: \(x\bmod 4\) (acc.)} & \textbf{Task: \(x\bmod 9\) (acc.)} \\
\midrule
\(2\)-adic (matched for mod 4) & \(\mathbf{1.000 \pm 0.000}\) & \(0.111 \pm 0.000\) \\
\(3\)-adic (matched for mod 9) & \(0.250 \pm 0.000\) & \(\mathbf{1.000 \pm 0.000}\) \\
\(5\)-adic (mismatched control) & \(0.250 \pm 0.000\) & \(0.112 \pm 0.005\) \\
Matched with permuted codes & \(0.250 \pm 0.000\) & \(0.112 \pm 0.002\) \\
Euclidean raw scalar & \(0.245 \pm 0.010\) & \(0.110 \pm 0.002\) \\
\midrule
\emph{Euclidean one-hot digits} & \(1.000 \pm 0.000\) & \(1.000 \pm 0.000\) \\
\emph{Chance reference} & \(1/4\) & \(1/9\) \\
\bottomrule
\end{tabular}%
\end{table}

In Table~\ref{tab:modulo}, matched-prime classifiers attain perfect test accuracy across five seeds, while mismatched primes, permuted codes, and Euclidean linear classifiers on raw scalar inputs remain near chance. These results show the importance of an encoding that linearly separates congruence classes, which for $p$-adics is prime-dependent; one-hot encodings do this (at the cost of dimension) and also fit.

\paragraph{Quillian Semantic Network Benchmark.}
\label{sec:quillian}

We evaluate on the 1{,}680-proposition Quillian Semantic Network
\citep{Quillian1968} as adapted by \citet{martins2025learning}. We use
Martins's hand-designed \(\mathbb{Q}_2\) entity codes, fixed before
optimization,
in which semantically close entities share long base-2 prefixes. Each
of the 28 properties is an independent binary task with a sigmoid
head; only 8.04\% of labels are positive, so we report positive-class
F1 and average precision (AP), pooled across properties within each
test split, and accuracy for comparison with Martins. 

Each head uses an affine map of the three entity-code coordinates
and four relation indicators; the active relation's weight supplies
the bias. Writing \(S=\max\{\hat r,|\hat c|_p\}\), the two valuation
logits \(z_0=0\) and \(z_1=-\log_p S\) give \(\pi_1=1/(1+S^{1/T})\) and
binary cross-entropy at \(T=1\).
An attribute is predicted present when \(\pi_1\ge1/2\), equivalently
\(S\le1\), recovering Martins's unit-threshold rule at zero radius.
Beam search ranks and prunes candidates using training labels; we also use them to select an initial input coordinate and digit depth \(v\). The selected coefficient starts at center \(p^{-v}\) and the others at zero, fixing an initial representative among classifiers equivalent under multiplication by a scalar of \(p\)-adic norm one. All coefficients remain trainable (\Cref{app:training-protocol}).

\begin{table}[t]
\centering
\small
\caption{Quillian benchmark test results, pooled across 28 binary
properties within each split (mean \(\pm\) SD over five random
80/20 splits). (full) means no batching. \textbf{Bold} marks the highest mean per metric within each model family. \textit{Italics} are our higher-accuracy reproductions of
\citet{martins2025learning}; see \Cref{app:quillian-beam}.
}
\label{tab:quillian}
\footnotesize
\begin{tabular}{@{}lccc@{}}
\toprule
\textbf{Representation, optimizer} & \textbf{Pooled F1}
  & \textbf{Pooled AP} & \textbf{Accuracy} \\
\midrule
\textit{\(2\)-adic linear, beam search (W=10, depth 8, full)}
  & \(\mathbf{0.875 \pm 0.084}\) & \(\mathbf{0.826 \pm 0.111}\) & \(\mathbf{0.981 \pm 0.012}\) \\
\(2\)-adic linear, beam search
  & \(0.845 \pm 0.082\) & \(0.804 \pm 0.106\) & \(0.976 \pm 0.011\) \\
\(2\)-adic linear, Adam (ours)
  & \(0.804 \pm 0.034\) & \(0.816 \pm 0.037\) & \(0.973 \pm 0.004\) \\
Permuted \(2\)-adic linear, Adam (ours)
  & \(0.470 \pm 0.092\) & \(0.551 \pm 0.077\) & \(0.915 \pm 0.014\) \\
\midrule
\textit{Euclidean MLP, Adam (full)}
  & \(\mathbf{0.844 \pm 0.053}\) & \(\mathbf{0.879 \pm 0.042}\) & \(\mathbf{0.976 \pm 0.008}\) \\
\midrule
Always negative, all scores equal
  & \(0.000 \pm 0.000\) & \(0.080 \pm 0.004\) & \(0.920 \pm 0.004\) \\
\bottomrule
\end{tabular}%
\end{table}

\Cref{tab:quillian} compares our training procedure with beam search on Martins's architecture and with their roughly three-times larger, two-layer Euclidean MLP (\Cref{app:quillian-protocol}). \citet{martins2025learning} also give hand-set weights that classify every proposition correctly, so test error here reflects learning from the training split, for every method. Gradient descent approaches both beam baselines in accuracy. As with modulo, the permuted control shows that giving Quillian data a \(2\)-adic encoding that preserves its hierarchy provides a useful inductive bias for gradient descent.

\section{Conclusion}

 The framework presented here trains affine models and one restricted two-layer model with \(p\)-adic parameters by
approximate joint gradient descent, using exact derivatives. For affine models
these derivatives can be accumulated sparsely (\Cref{app:coincidence}).
Though we describe the theory for arbitrary-depth models, the tradeoff between the radius bound versus the cost of collecting stages, and the ideal architecture and setup for deep models remain open questions.
As suggested by \citet[\S 6.4]{martins2025learning}, we intend to develop adelic formulations of losses and descent, capturing the advantages of both real and $p$-adic norms.
Over Berkovich spaces, this would be Arakelov geometry \citep{chambert2006mesures}.
Convergence guarantees remain open. See \Cref{app:disclosures} for our work's limitations and use of AI.

\paragraph{Acknowledgements.} We thank Micha\"{e}l Sander, Jessica Hoffmann, Maxime Guigon, and the anonymous reviewers and area chair for their helpful feedback. We thank the AI co-mathematician team at Google DeepMind for providing model access and support. The first author thanks Clifton Cunningham for a guided reading of \textit{A Course on Arithmetic} in 2014-15, and congratulates Jean-Pierre Serre on his 100th birthday.

{
\small

\bibliographystyle{plainnat}
\bibliography{references}

}

\appendix
\crefalias{section}{appendix}
\crefalias{subsection}{subappendix}
\Crefname{appendix}{Appendix}{Appendices}
\Crefname{subappendix}{Appendix}{Appendices}

\section{Disclosures}
\label{app:disclosures}

\paragraph{Limitations.} We do not yet present formal guarantees of convergence, and our nonlinear experiments cover only one two-layer architecture. Restricting participation to one coordinate per group can miss descent
directions that require several coordinates in that group to move together. Exact center arithmetic (\Cref{app:arithmetic-resolution}) restricts
\(\Qp\) values to \(\mathbb{Z}[1/p]\) up to a \(p\)-dependent precision. Our evaluation is small in scale. Larger \(p\)-adic experiments exist for exactly solved one-coefficient regression \citep{baker2022number} and for fitting fixed taxonomies \citep{nguessan2025vpunns}; evaluating our optimizers at that scale is future work.

\paragraph{AI use.} We used AI models, including Gemini 3.x \citep{comanici2025gemini} and the AI co-mathematician \citep{zheng2026ai}, in preparing this publication. We drove the research direction, conceptual foundations, exposition, and experiment design. AI drove the implementation, technical proofs, and formal presentation, which we revised. We accept responsibility for the contents of this work.

\section{Comparison with \(p\)-adic Optimization Methods}
\label{app:related}
\label{app:cost-accounting}

\suppressfloats[t]
\begin{table}[t]
\centering
\small
\caption{Cost of choosing one update for \(p\)-adic optimizers, in evaluations of the loss or of one directional derivative on the examples used for that update. More details per method in \Cref{app:related}. \(\nparams\): number of model parameters; \(E\): \(p\)-adic digits searched per parameter; \(W\): beam width; \(\beta>0\): the random walk's weighting parameter. \(^{\dagger}\)Draws are uniform, so total work matches exhaustive search.}
\label{tab:complexity}
\begin{tabularx}{\linewidth}{@{}>{\raggedright\arraybackslash}X >{\raggedright\arraybackslash}X r@{}}
\toprule
\textbf{Method} & \textbf{Update choice} & \textbf{Cost} \\
\midrule
\multicolumn{3}{@{}l}{\emph{Zeroth-order, discrete: moves chosen by model evaluation at candidates}}\\
\hspace{1em}Exhaustive search & every digit assignment & \(p^{E \nparams}\) \\
\hspace{1em}Beam search \citep{martins2025learning} & next digit of all parameters; keep best \(W\) & \(O(W p^{\nparams})\) \\
\hspace{1em}Greedy search \citep{mihara2026padicpolynomial} & best value per coefficient & \(O(p^E)\) \\
\hspace{1em}Random digit search \citep{khrennikov2000learning} & uniformly random digit vectors & \(O(1)\) \\
\hspace{1em}Weighted random walk \citep{zubarev2025adic} & loss-weighted random increments & \(\le e^{\beta}\) expected\(^{\dagger}\) \\
\hspace{1em}Simulated annealing \citep{mihara2026padicpolynomial} & annealed coefficient replacement & \(O(1)\) \\
\midrule
\multicolumn{3}{@{}l}{\emph{First-order, continuous: moves chosen by local tangent information}}\\
\hspace{1em}Joint GD (ours) & naive enumeration of branch tuples & \((p+1)^{\nparams}\) tuples at vertices \\
\hspace{1em}GD (approx.) (ours) & steepest direction of one coordinate per group & \(O(p\nparams)\) \\
\hspace{1em}Coord.~GD (ours) & steepest direction of one coordinate & \(O(p)\) \\
\bottomrule
\end{tabularx}
\end{table}

\paragraph{How costs are counted.}
A directional derivative costs one pass through the model, as a loss evaluation
does (\Cref{app:phad}); \Cref{app:evaluation-cost} gives the cost of a pass for
our regression experiments, where the derivatives share the forward computation,
so counting a separate pass for each gives upper bounds for coordinate and
grouped GD. Entries are per update and do not say how many updates an optimizer
needs.

The methods below, except those in the last paragraph, optimize
classical \(p\)-adic values or finite digit prefixes. Gradient descent
permits positive-radius states in the Berkovich hull.

\paragraph{Classifier beam search.} \citet{martins2025learning}'s Algorithm 1 trains a binary
linear classifier by searching the base-\(p\) digits of its weights. A node at depth \(\delta\) represents the first
\(\delta\) digits of all \(\nparams\) coefficients. At the next depth,
every retained node is expanded into its \(p^{\nparams}\) possible
next-digit vectors, inadmissible candidates are pruned using the
positive-error constraint, and the \(W\) candidates with the fewest
total mistakes are kept.
Martins reports runtime \(O(WEnp^{\nparams})\) when each
candidate is evaluated on \(n\) examples. Counting each candidate's loss
value as one evaluation, one depth costs \(O(Wp^{\nparams})\), the entry
in \Cref{tab:complexity}. A finite beam can discard the
prefix of the best later solution, so the procedure is an approximate,
depth-limited prefix search rather than a local optimizer in the usual
gradient sense.

\paragraph{Hamming-graph heuristics.} \citet{mihara2026padicpolynomial} represents the polynomial
coefficients in \((\mathbb Z/p^E\mathbb Z)^{\nparams}\) and optimizes a
lexicographic vector that first counts errors in the earliest digit,
then the next, and so on.
A Hamming graph lets an update replace one
coefficient residue while holding the others fixed: a greedy sweep tries
all \(p^E\) values for each coefficient, for \(\nparams p^E\) loss
updates plus data and modular-arithmetic cost; hill climbing repeatedly
proposes a coefficient and a replacement value; simulated annealing uses
the same proposals but may accept a worsening move under its temperature
schedule.

\paragraph{Random digit search.} \citet{khrennikov2000learning} train one 2-adic threshold unit, which outputs \(0\) when \(|\vw^\top\vx|_2\le2^{-k}\) and \(1\) otherwise, and minimize the fraction of wrong outputs. Weights are restricted to \(k\) binary digits, a grid of \(2^{k\nparams}\) vectors. For each example they sample the grid uniformly until a vector gives the right output, then sample spheres of doubling 2-adic radius around it to collect more such vectors. They then sample from these sets and keep the vector that is right on the most sampled examples. Each draw costs one evaluation on one example, the \(O(1)\) entry of \Cref{tab:complexity}; when the grid is small they enumerate it instead.

\paragraph{Weighted random walk.} \citet{zubarev2025adic} studies a
Mahler-polynomial model with the arithmetic mean of \(p\)-adic residual
distances. The update adds a whole-vector increment
\(\xi\) whose conditional density relative to Haar measure on
\(\mathbb Z_p^{\nparams}\) is proportional to
\(\exp\{-\beta[L(w+\xi)-L(w)]\}\).
Under the assumption that improving
increments have positive Haar measure, an appropriate range of \(\beta\)
gives strict expected one-step descent. \citet{zubarev2025adic} gives an update rule but no sampler. Since the loss is nonnegative, rejection sampling from Haar measure with acceptance probability \(\exp(-\beta L(w+\xi))\), in their notation, draws it exactly at one loss evaluation per proposal. Since the loss is at most one for data and parameters in \(\Zp\), the expected number of proposals per step is at most \(e^{\beta}\). For fixed \(\beta\), the random-walk row of \Cref{tab:complexity} therefore lists an expected cost independent of \(\nparams\), \(E\), and \(p\); their expected-descent result holds only for a range of \(\beta\) that depends on the current point. Each proposal is a uniform draw from \(\Zp^{\nparams}\), so fixing \(E\) digits of every coefficient takes as many proposals in expectation as exhaustive search.

\paragraph{Methods that optimize real numbers.} Real-parameter models appear in our experiments as the Euclidean rows of \Cref{tab:modulo,tab:quillian}; we do not claim that \(p\)-adic models outperform them, and the one-hot model also fits the modulo tasks. We do not run the following published methods. \citet{zuniga2024deep} trains real weights by ordinary gradient descent; the group's implemented example trains real-valued weights on unlabeled MNIST by contrastive divergence and shows reconstructions \citep{zuniga2023boltzmann}. Adam--VAPO \citep{nguessan2025vpunns} runs Adam on one real latent per weight and rounds it to a base-\(p\) digit; its gradient is given for v-PuNN's per-digit heads, which map a parent digit to a child digit in a fixed hierarchy, and its greedy variant compares a digit with its two neighbors modulo \(p\). Hyperbolic embeddings \citep{nickelKiela2017} and ultrametric fitting \citep{chierchia2019ultrametric} learn real coordinates or real edge weights for given data points and have no model \(f(\vx;\vtheta)\) to train on our tasks.

\section{Formal Geometric Definitions}
\label{app:formal-geometry}

A Euclidean parameter inherits its possible directions and unit speed
from its ambient vector space. Gradient descent needs the corresponding data on the
Berkovich hull.

\paragraph{The Berkovich affine line.} A multiplicative seminorm on a
ring \(A\) is a map \(|\cdot|_x:A\to\mathbb R_{\ge0}\) satisfying
\(|0|_x=0\), \(|1|_x=1\), \(|f+g|_x\le\max\{|f|_x,|g|_x\}\), and
\(|fg|_x=|f|_x|g|_x\). The Berkovich affine line over \(K=\mathbb Q_p\)
is
\[
\mathbb A^{1,\mathrm{an}}_K=\mathcal M(K[T]),
\]
the set of all multiplicative seminorms on \(K[T]\) whose restriction to
constants is the given absolute value, \(|c|_x=|c|_p\) for \(c\in K\).
It carries the coarsest topology for which \(x\mapsto|P|_x\) is
continuous for every \(P\in K[T]\). An ordinary point \(a\in K\) embeds
by evaluation, \(|P|_a:=|P(a)|_p\).

\begin{definition}[Disk points and representatives]
\label{def:disk-as-seminorm}
For \(a\in K\) and \(r>0\), the disk point \(\zeta_{a,r}\) is the
multiplicative seminorm
\[
\Bigl|\sum_{n=0}^{N}c_n(T-a)^n\Bigr|_{\zeta_{a,r}}
:=\max_n |c_n|_p\,r^n.
\]
We also write \(\zeta_{a,0}=a\) for evaluation at the ordinary point
\(a\).
\end{definition}

\begin{lemma}[Representatives]
\label{lem:representatives}
For \(r,s\ge0\), \(\zeta_{a,r}=\zeta_{b,s}\) if and only if \(r=s\) and \(|a-b|_p\le r\).
\end{lemma}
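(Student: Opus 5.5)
The plan is to prove the two directions separately, working directly with the seminorm of Definition~\ref{def:disk-as-seminorm}. Evaluating a seminorm on the polynomials \(T-a\), \(T-b\), and their powers recovers both the radius and the relation between the centers.

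\emph{Sufficiency.} Suppose \(r=s\) and \(|a-b|_p\le r\). If \(r=0\), then \(a=b\) and there is nothing to prove. For \(r>0\), take a polynomial \(P=\sum_n c_n(T-a)^n\) and re-expand it about \(b\) using \(T-a=(T-b)+(b-a)\). This gives \(P=\sum_m d_m(T-b)^m\) with
\[
d_m=\sum_{n\ge m}\binom{n}{m}c_n(b-a)^{n-m}.
\]
Since \(|\binom{n}{m}|_p\le1\), the ultrametric inequality gives
\[
|d_m|_p\,r^m\le\max_{n\ge m}|c_n|_p\,|b-a|_p^{n-m}r^m\le\max_n|c_n|_p\,r^n .
\]
Hence \(|P|_{\zeta_{b,r}}\le|P|_{\zeta_{a,r}}\). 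The hypothesis is symmetric in \(a\) and \(b\), so the same computation gives the reverse inequality, and the two seminorms agree on every polynomial.

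\emph{Necessity.} Suppose \(\zeta_{a,r}=\zeta_{b,s}\) as seminorms. First evaluate on \(T-a\): the left side gives \(r\) (for \(r=0\) it gives \(|a-a|_p=0\)). On the right, write \(T-a=(T-b)+(b-a)\), which gives \(\max(s,|b-a|_p)\). For \(s=0\) this is \(|b-a|_p\), which is again equal to \(\max(s,|a-b|_p)\). So we obtain \(r=\max(s,|a-b|_p)\), and in particular \(r\ge s\) and \(r\ge|a-b|_p\). Evaluating on \(T-b\) in the same way gives \(s=\max(r,|a-b|_p)\ge r\). Together these give \(r=s\) and \(|a-b|_p\le r\).

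The one step needing care is making the degenerate cases consistent with the disk formula. When \(r=0\) or \(s=0\), the point is an evaluation seminorm rather than a Gauss-type norm. So I will check that the formula \(\max_n|c_n|_p r^n\) with \(r=0\) returns \(|c_0|_p=|P(a)|_p\), using the convention \(0^0=1\). Once that holds, both directions above apply uniformly to all \(r,s\ge0\). The binomial integrality \(|\binom{n}{m}|_p\le1\) is the only arithmetic input, and it is standard.
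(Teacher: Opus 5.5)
Your proof is correct and takes the same route as the paper. For necessity, both evaluate the seminorms on \(T-a\) and \(T-b\) to obtain \(r=\max\{s,|a-b|_p\}\) and \(s=\max\{r,|a-b|_p\}\); for sufficiency, both re-expand about the other center and apply the ultrametric inequality, and you additionally write out the binomial bound \(|\binom{n}{m}|_p\le1\), the symmetry step for the reverse inequality, and the radius-zero evaluation convention, which the paper leaves implicit.
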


\begin{proof}
If the two seminorms are equal, evaluating \(T-a\) and \(T-b\) gives
\(r=\max\{s,|a-b|_p\}\) and \(s=\max\{r,|a-b|_p\}\); hence \(r=s\) and
\(|a-b|_p\le r\). Conversely, if \(|a-b|_p\le r\), expanding a
polynomial about either center and using the ultrametric inequality
gives the same Gauss seminorm.
\end{proof}

\begin{proposition}[The scalar hull and its arcs]
\label{lem:scalar-hull}
The hull \(\Gamma_p\) consists exactly of the points
\(\zeta_{a,r}\) with \(a\in\Qp\) and \(r\ge0\).
For \(x=\zeta_{a,r}\) and \(y=\zeta_{b,s}\), set
\(R=\max\{r,s,|a-b|_p\}\). Their arc in the Berkovich affine line is
\[
[x,y]=\{\zeta_{a,t}:r\le t\le R\}
       \cup\{\zeta_{b,t}:s\le t\le R\}.
\]
The two radius segments meet only at
\(\zeta_{a,R}=\zeta_{b,R}\).
\end{proposition}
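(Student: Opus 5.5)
The plan is to first build explicit arcs between disk points, then use uniqueness of arcs in the Berkovich line (it is a uniquely arcwise-connected $\R$-tree, a standard fact we take as given) to identify $\Gamma_p$.

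\textbf{Step 1: radius segments are arcs.} For fixed $a\in\Qp$, consider the map $t\mapsto\zeta_{a,t}$ on $[r,R]$. It is continuous for the Berkovich topology: for any polynomial $P=\sum c_n(T-a)^n$, the function $t\mapsto|P|_{\zeta_{a,t}}=\max_n|c_n|_pt^n$ is a finite max of continuous functions of $t$, and this also holds at $t=0$. It is injective by \Cref{lem:representatives}, since distinct radii give distinct points. Hence each radius segment is an arc.

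\textbf{Step 2: the union is an arc.} Since $|a-b|_p\le R$, \Cref{lem:representatives} gives $\zeta_{a,R}=\zeta_{b,R}$. The two segments therefore share this endpoint. Now suppose $\zeta_{a,t}=\zeta_{b,t'}$ with $t,t'\le R$. By \Cref{lem:representatives}, $t=t'$ and $|a-b|_p\le t$, so $t\ge\max\{|a-b|_p,r,s\}=R$ and the point is $\zeta_{a,R}$. Hence the two segments meet only there, and their concatenation is an injective path from $x$ to $y$, i.e.\ an arc. By uniqueness of arcs in $\A^{1,\mathrm{an}}_{\Qp}$, this is $[x,y]$.

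\textbf{Step 3: identifying $\Gamma_p$.} For the inclusion $\Gamma_p\subseteq\{\zeta_{a,r}\}$, apply Step 2 with $x=\iota(a)$, $y=\iota(b)$ and $r=s=0$. Every point of $[\iota(a),\iota(b)]$ is then of the form $\zeta_{a,t}$ or $\zeta_{b,t}$. For the reverse inclusion, take any $\zeta_{a,r}$. If $r=0$ it is $\iota(a)$. If $r>0$, choose $b\in\Qp$ with $|a-b|_p\ge r$, which is possible since $|\cdot|_p$ is unbounded on $\Qp$, e.g.\ $b=a+p^{-n}$ for large $n$. Then $R=|a-b|_p\ge r$, so $\zeta_{a,r}\in[\iota(a),\iota(b)]\subseteq\Gamma_p$.

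\textbf{Main obstacle.} The key input is the uniqueness of arcs in the Berkovich line, which we plan to take as known (Berkovich; Baker--Rumely). If a self-contained argument is wanted, it would also have to handle points of types~3 and~4 that might a priori lie on other paths. For the identification of $\Gamma_p$ as defined in the paper via arcs, however, only the uniqueness statement is needed. The continuity in Step 1 at $t=0$ and at $t=R$ is routine.
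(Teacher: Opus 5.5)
Your proposal is correct and follows the paper's proof essentially step for step. It uses continuity of $t\mapsto\zeta_{a,t}$ via the defining topology, injectivity and the unique meeting point $\zeta_{a,R}=\zeta_{b,R}$ via \Cref{lem:representatives}, uniqueness of arcs cited from the literature, and $b=a+p^{-n}$ for the reverse inclusion. The one detail the paper makes explicit and you gloss over is that a continuous injective path is an arc only because the Berkovich line is Hausdorff (polynomial evaluations separate points), so the compact image is homeomorphic to a closed interval; you should also set aside the degenerate case $x=y$, where both segments are the same singleton.
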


\begin{proof}
Fix \(a\in\Qp\). For each polynomial
\(P(T)=\sum_{n=0}^N c_n(T-a)^n\), the function
\[
t\longmapsto |P|_{\zeta_{a,t}}
   =\max_{0\le n\le N}|c_n|_p t^n
\]
is continuous on \([0,\infty)\), with its value at zero interpreted
as \(|P(a)|_p\). By the defining topology of the Berkovich line,
\(t\mapsto\zeta_{a,t}\) is continuous. It is injective because
evaluating \(T-a\) recovers \(t\).

For \(u\in[r,R]\) and \(v\in[s,R]\), \Cref{lem:representatives}
gives \(\zeta_{a,u}=\zeta_{b,v}\) exactly when
\(u=v\ge|a-b|_p\). These bounds force \(u=v=R\).
Thus, unless \(x=y\), the two segments concatenate to a continuous
injective path from \(x\) to \(y\). The Berkovich line is Hausdorff
since polynomial evaluations separate its points, so this compact
path is homeomorphic to a closed interval. By
\citet[Proposition~I.6.12]{poineau2020berkovich}, the Berkovich
line over \(\Qp\) has a unique arc between any two distinct points.
Our constructed path is therefore \([x,y]\). When \(x=y\), both
segments are the same singleton.

Taking \(r=s=0\) shows that every arc between ordinary \(p\)-adic
points consists of points of the stated form. Conversely, given
\(\zeta_{a,r}\), choose an integer \(n\) with \(p^n\ge r\) and put
\(b=a+p^{-n}\). Since \(|a-b|_p=p^n\), the arc formula places
\(\zeta_{a,r}\) on \([a,b]\). Taking the union of these arcs proves
the asserted description of \(\Gamma_p\).
\end{proof}

For \(p=3\), \(\zeta_{0,1}\) is the Gauss point, and
\(\zeta_{0,1}=\zeta_{1,1}\) since \(|0-1|_3\le 1\).

The points of positive radius identify with the Bruhat--Tits tree of \(\mathrm{PGL}_2(\mathbb Q_p)\), the \(p\)-adic counterpart of the hyperbolic plane \citep{remy2014bruhat}. However, this tree has its own metric, under which adding \(\Qp\) as leaves makes its numbers infinitely distant.

\begin{definition}[Product states as seminorms]\label{rem:product-points}
A product state \((\zeta_{c_1,r_1}, \ldots, \zeta_{c_d,r_d})\) defines a multiplicative seminorm on \(\Qp[T_1, \ldots, T_d]\): writing \(f = \sum_I a_I (T - c)^I\), set \(|f| = \max_I |a_I|_p\, r_1^{i_1} \cdots r_d^{i_d}\)---the several-variable Gauss seminorm, multiplicative by the Gauss lemma \citep{bosch1984nonarchimedean}. Product states are therefore points of the \(d\)-dimensional analytification, and the forward rule of \Cref{sec:forward_propagation} evaluates exactly this seminorm. \end{definition}

\begin{example}[Coordinate images do not determine the joint state]
\label{ex:coordinate-marginals}
Scalar coordinate states do not, in general, determine a point of the higher-dimensional analytification. For example, let \(\zeta=\zeta_{0,1}\), let \(\xi\) be the product state \((\zeta,\zeta)\), and define \(|f|_\eta:=|f(T,T)|_\zeta\) for \(f\in\Qp[T_1,T_2]\). Both points restrict to \(\zeta\) on each coordinate, but
\[
|T_1-T_2|_\xi=1,\qquad |T_1-T_2|_\eta=0.
\]
At \(\eta\), the parameters satisfy \(T_1=T_2\) although both scalar radii equal one. The product state \(\xi\) has the same scalar restrictions but does not encode this relation.
\end{example}

The metric \(d_{\text{arc}}\) of \cref{eq:arc-metric} does not depend on the chosen center representatives (\Cref{lem:representatives}).

A metric space \((Y,d_Y)\) is \emph{hyperconvex} if every family of
closed balls \(\{\overline B_Y(y_i,r_i)\}_{i\in I}\) with \(r_i\ge0\)
satisfies
\[
  d_Y(y_i,y_j)\le r_i+r_j\ \text{ for all }i,j\in I
  \quad\Longrightarrow\quad
  \bigcap_{i\in I}\overline B_Y(y_i,r_i)\ne\varnothing.
\]
The \emph{tight span}, or \emph{metric envelope}, \(E(X)\) of a metric
space \(X\) is a hyperconvex space containing an isometric copy of \(X\),
with no proper hyperconvex subspace containing that copy. It is also
called the ordinary metric injective hull \citep[Secs.~2--3]{lang2013}.

\begin{theorem}[The \(p\)-adic metric envelope]
\label{thm:padic-injective-hull}
For the scalar hull \(\Gamma_p\) of \Cref{sec:parameter-space}, equipped
with the metric \(d_{\text{arc}}\) of \Cref{eq:arc-metric}, the map
\(x\mapsto\zeta_{x,0}\) realizes
\((\Gamma_p,\tfrac12d_{\text{arc}})\) as the tight span, or metric envelope,
of \((\Qp,d_p)\). Thus
\[
  E(\Qp,d_p)\cong(\Gamma_p,\tfrac12d_{\text{arc}}),
\]
with a unique isometry identifying each ordinary \(p\)-adic point with
its radius-zero disk point.
\end{theorem}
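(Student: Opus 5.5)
The plan is to use the standard characterization of the tight span via Isbell's construction (\citealp{lang2013}): $E(X)$ is the set of \emph{extremal functions} $f:X\to\R_{\ge0}$, namely those with $f(x)+f(y)\ge d(x,y)$ for all $x,y$ that are pointwise minimal with this property. Equivalently, $f(x)=\sup_y\bigl(d(x,y)-f(y)\bigr)$ for every $x$. This set carries the sup metric, and $X$ embeds via $x\mapsto d(x,\cdot)$. I will define
\[
\Phi:\Gamma_p\to\R^{\Qp},\qquad \Phi(\zeta_{a,r})(x):=\tfrac12 d_{\text{arc}}(\zeta_{a,r},\zeta_{x,0})=\max(|x-a|_p,r)-\tfrac r2,
\]
and show that $\Phi$ is an isometry from $(\Gamma_p,\tfrac12 d_{\text{arc}})$ onto the set of extremal functions. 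By \Cref{lem:representatives} the formula is independent of the representative $a$. On ordinary points it gives $\Phi(\zeta_{a,0})=|\cdot-a|_p$, which is Isbell's embedding. Uniqueness of the identifying isometry then follows from the standard rigidity of tight spans: the only isometry of $E(X)$ fixing $X$ pointwise is the identity.

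\textbf{Step 1: $\Phi(\zeta)$ is extremal.} Admissibility is the triangle inequality for $d_{\text{arc}}$ together with $\tfrac12 d_{\text{arc}}(\zeta_{x,0},\zeta_{y,0})=|x-y|_p$. For minimality, fix $x$ and exhibit $y$ with $f(x)+f(y)=|x-y|_p$, or a sequence approaching this.
\begin{itemize}
\item If $|x-a|_p>r$, take $y=a$. Then $f(y)=r/2$ and $f(x)=|x-a|_p-r/2$, so the sum is exact.
\item If $|x-a|_p\le r$ and $r>0$, choose $y$ with $|x-y|_p=r$ when $r\in p^{\Z}$, so the sum equals $r-r/2+r-r/2=r$. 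When $r\notin p^{\Z}$, such a $y$ does not exist; this case is handled in Step 1b.
\item If $r=0$, take $y=x$.
\end{itemize}

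\textbf{Step 1b: the case $r\notin p^{\Z}$.} Here $f(x)+f(y)=r$ for every $y$ with $|x-y|_p\le r$, while $|x-y|_p<r$, so the defect $r-|x-y|_p$ is positive. For $|x-y|_p>r$ the sum is $2|x-y|_p-r$, which exceeds $|x-y|_p$. Hence no single $y$ certifies minimality, and I will instead show directly that no smaller admissible $g$ exists. Suppose $g\le f$ with $g(x_0)<f(x_0)$ for some $x_0$ in the disk. Take $y$ on the opposite branch of the next vertex above, with $|x_0-y|_p=p^{\lceil\log_p r\rceil}=:R$. Then $f(y)=R-r/2$, so $g(x_0)+g(y)<r/2+R-r/2=R$, which violates admissibility. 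Points $x_0$ outside the disk are covered by the first case of Step 1.

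\textbf{Step 2: every extremal $f$ is some $\Phi(\zeta)$.} Set $r:=2\inf f$. Choose a near-minimizer $a$ and exploit ultrametricity: extremality gives $f(x)=\sup_y\bigl(|x-y|_p-f(y)\bigr)$, and with the strong triangle inequality this forces $f(x)=\max(|x-a|_p,r)-r/2$. The infimum must be attained, or at least $f$ must be constant near its infimum on a disk, which I will prove using completeness of $\Qp$ (a Cauchy sequence of near-minimizers converges). \emph{I expect this step to be the main obstacle}, in particular when $\inf f$ is approached but not attained, and in the edge case where $f$ is unbounded below in the "up" direction. The latter cannot occur because $f\ge0$.

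\textbf{Step 3: $\Phi$ is isometric.} Show that $\sup_x|\Phi(\zeta)(x)-\Phi(\zeta')(x)|=\tfrac12 d_{\text{arc}}(\zeta,\zeta')$ by evaluating at $x$ near the LCA's opposite branches. The upper bound is the triangle inequality; the lower bound comes from choosing $x$ beneath $\zeta'$ far from $\zeta$, which is again a case split on the two configurations in \cref{eq:arc-metric}.
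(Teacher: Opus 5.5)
You work with Isbell extremal functions, which is a legitimate alternative to the paper's route. The paper shows $(\Gamma_p,\tfrac12 d_{\text{arc}})$ is a complete $\R$-tree and cites Basso's lemma: in a complete $\R$-tree, the closure of the union of geodesics between points of a subset is its tight span. Your Steps 1 and 3 go through, apart from a slip noted below. \textbf{The gap is Step 2}, which is the heart of the identification: you do not show that every extremal $f$ has the form $\Phi(\zeta)$, and the mechanism you propose cannot work.

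When $m:=\inf f>0$, admissibility only gives $|x_n-x_k|_p\le f(x_n)+f(x_k)\approx 2m$ for near-minimizers, so they need not be Cauchy. For $f=\Phi(\zeta_{0,1})$, every point of $\Zp$ is an exact minimizer. Completeness plus the strong triangle inequality cannot suffice either. Take the complete ultrametric space $\{x_1,x_2,\dots\}$ with $d(x_i,x_j)=1+1/\min(i,j)$ for $i\ne j$. The function $x_i\mapsto\tfrac12+\tfrac1i$ is extremal, since its defining supremum is approached as $j\to\infty$. Its infimum is not attained, so it is not of the form $\max(d(\cdot,a),r)-\tfrac r2$ for any center $a$ in that space. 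Your Cauchy argument therefore handles only $m=0$, i.e.\ the ordinary points.

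The missing input is local compactness of $\Qp$. An extremal $f$ is $1$-Lipschitz. The nonempty closed set $\{f\le m+1\}$ lies in a closed disk of radius $2m+2$ about any of its points, and that disk is compact. Hence $f(a)=m$ for some $a$. (Alternatively, discreteness of $|\Qp^\times|_p=p^{\Z}$ forces $\{f<m+\epsilon\}$ to be one fixed disk for all small $\epsilon$.)

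After that, no manipulation of the supremum formula is needed:
\begin{itemize}
\item admissibility gives $f(x)\ge|x-a|_p-m$, and $f\ge m$, so $f\ge\Phi(\zeta_{a,2m})$ pointwise;
\item $\Phi(\zeta_{a,2m})$ is admissible by Step 1, so minimality of $f$ forces $f=\Phi(\zeta_{a,2m})$.
\end{itemize}
This is the same property the paper relies on, there to obtain completeness of $\Gamma_p$ from compactness of closed balls.

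Separately, Step 1b rests on a miscomputation. If $|x-a|_p<r<|y-a|_p$, then $|x-y|_p=|y-a|_p$, so $f(x)+f(y)=\tfrac r2+\bigl(|x-y|_p-\tfrac r2\bigr)=|x-y|_p$. Thus any $y$ outside the closed disk of radius $r$ about $a$ certifies minimality, for every $r>0$. Your own Step 1b argument uses exactly such a $y$, so its conclusion stands. But the case split on $r\in p^{\Z}$ is unnecessary, and the claim that no single $y$ certifies minimality is false.
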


\begin{proof}
Use the disk points of \Cref{def:disk-as-seminorm}, with representatives
identified by \Cref{lem:representatives}. The arc description in
\Cref{eq:arc} and the length formula in \Cref{eq:arc-metric} make
\((\Gamma_p,\tfrac12d_{\text{arc}})\) an \(\R\)-tree. Indeed, finitely many
upward radius segments can be extended to a common height; they merge
and never separate, forming a finite tree whose edge lengths are half
the radius increments. Its distances agree with
\(\tfrac12d_{\text{arc}}\), so all metric triangles are tripods.
\Cref{eq:arc-metric} also gives
\(\tfrac12d_{\text{arc}}(\zeta_{x,0},\zeta_{y,0})=d_p(x,y)\).

It remains to prove completeness. Let \(C_M\) be the closed ball of
radius \(M\ge0\) about \(\zeta_{0,0}\) for
\(\tfrac12d_{\text{arc}}\). By \Cref{eq:arc-metric},
\[
  \zeta_{x,r}\in C_M
  \quad\Longrightarrow\quad
  \max\{|x|_p,r\}-\frac r2\le M
  \quad\Longrightarrow\quad
  r\le2M,\quad |x|_p\le2M.
\]
Consequently \(C_M\) is contained in the image of the compact set
\[
  \{x\in\Qp:|x|_p\le2M\}\times[0,2M]
\]
under \(q(x,r)=\zeta_{x,r}\). The same metric formula gives
\[
  \frac12d_{\text{arc}}(q(x,r),q(y,s))
  \le d_p(x,y)+\frac12|r-s|,
\]
so \(q\) is continuous. Hence \(C_M\), being a closed subset of that
compact image, is compact. Every Cauchy sequence lies in some \(C_M\),
and therefore converges in \(\Gamma_p\).

For a subset of a complete \(\R\)-tree, the closure of the union of
geodesics joining its points is its tight span
\citep[Lem.~2.3]{basso2023}. Apply this to the embedded \(\Qp\).
By the definition of \(\Gamma_p\) in \Cref{sec:parameter-space}, this
union is already all of \(\Gamma_p\). The required identification
follows, and uniqueness fixing the embedded \(\Qp\) is the standard
uniqueness of tight spans \citep[Sec.~3]{lang2013}.
\end{proof}

Descent selects among the directions leaving a point.

\begin{definition}[Directions and metric tangent cone]
\label{def:metric-tangent-cone}
For \(x\in\Gamma_p\), let
\(\Sigma_x\Gamma_p:=\pi_0(\Gamma_p\setminus\{x\})\)
be the set of incident tangent directions in \(\Gamma_p\), where
\(\pi_0\) denotes connected components. A direction is represented by a unit-speed geodesic germ \(\gamma : [0, t_0) \to \Gamma_p\) with \(\gamma(0) = x\), which we identify with the ray it represents. At a point \(a \in K\), this set has one
member, \(\gamma(t)=\zeta_{a,t}\). At a vertex it has \(p+1\) members, the parent and
the \(p\) child directions.
Changing center or scale representatives may relabel the children but
not the underlying ray; \cref{eq:unit-paths} fixes them so that the label is a digit.

The \textit{metric tangent cone} \(T_x\Gamma_p\) consists of velocities
\((q,s)\) with \(q\in\Sigma_x\Gamma_p\) and \(s\geq0\),
with all zero-speed pairs identified. Its metric is
\[
d_T\bigl((q,s),(q',t)\bigr)
=
\begin{cases}
|s-t|, & q=q',\\
s+t, & q\ne q'.
\end{cases}
\]
\end{definition}

Reparametrizing gives \(\vgamma_{c\vv}(h)=\vgamma_{\vv}(ch)\) for \(c\ge0\), so
\(D_{\vv}L\) is positively homogeneous: \(D_{c\vv}L=c\,D_{\vv}L\). Hence
\cref{eq:gd} may be evaluated at any positive multiple of \(\vv^*\) with the
time rescaled accordingly.

\paragraph{Coupled parameter groups for affine models.}
\label{app:directional-groups}
Fix the parameters and minibatch. Index the scalar components of the model's output
on the minibatch examples by \(a\in\mathcal R\). Write
\(F_a(u)=b_a+\sum_j a_{aj}u_j\), combining repeated occurrences
of each parameter exactly. For
\(\theta_j=\zeta_{c_j,r_j}\), define
\[
R_a=\max\!\left(\{R_a^0\}\cup
 \{|a_{aj}|_p r_j:j\in\mathcal J_a\}\right),
\qquad
A_a=\{j\in\mathcal J_a:|a_{aj}|_p r_j=R_a\},
\]
where \(\mathcal J_a\) indexes the nonzero coefficients and
\(R_a^0\) is a fixed radius contribution, zero when absent.
The active set \(A_a\) retains every tie, including parameters
that will remain stationary during the update.

Let \(\vz_a=\varphi_a(F_a(\vtheta))\) be the vector of real quantities
computed from output \(a\) for use in the loss. Write
\(L=\ell((\vz_a)_{a\in\mathcal R})\), with \(\ell\) differentiable at
the current values, and suppose the directional derivatives of
\(\vz_a\) exist. The chain rule gives
\[
D_{\vv}L=\sum_a c_a(\vv),\qquad
c_a(\vv)=\left\langle\nabla_{\vz_a}\ell,D_{\vv}\vz_a\right\rangle.
\]

A parameter outside \(A_a\) has a radius contribution strictly below
\(R_a\). For sufficiently short motion with fixed finite speeds,
that contribution remains below the maximum. At a parameter vertex,
its center shift also has norm below \(R_a\), so the output germ is
unchanged. Thus \(D_{\vv}\vz_a\), and hence \(c_a(\vv)\), depends only on
the velocities of parameters in \(A_a\).

In the affine implementation, collect identical functions to write
\(D_{\vv}L=\sum_h\alpha_h h(\vv)\).
Functions are identical only if they agree for every allowed joint velocity,
including every child direction at a vertex.
For each \(\alpha_h\ne0\), group the parameters on which \(h\) depends,
merging overlapping groups. Summing terms within each group gives the
decomposition below.

\subsection{Coordinate restriction}
\label{app:coordinate-restriction}
Fix the parameter state and minibatch. Suppose a partition
\(\mathcal G\) satisfies
\[
D_{\vv}L=\sum_{G\in\mathcal G}\psi_G(\vv_G).
\]
Let \(\mathcal V\) be the product of the coordinate velocity sets
allowed by the parameter domain. Select one coordinate from each
group, forming a set \(S\), and put
\(\mathcal V_S=\{\vv\in\mathcal V:s_i=0\text{ for }i\notin S\}\).
Write \(\mu_G\) for the least slope over unit velocities supported on group
\(G\)'s selected coordinate, a minimum over its incident directions. For unit
\(\vv\in\mathcal V_S\), put \(w_G=\|\vv_G\|_2\); homogeneity gives
\(D_{\vv}L=\sum_{G:w_G>0} w_G\,\psi_G(\vv_G/w_G)\ge\langle w,\mu\rangle\).
If every \(\mu_G\ge0\), every such derivative is nonnegative, so
\cref{eq:gd} requests non-positive time and no coordinate moves.
Otherwise, Cauchy--Schwarz over \(\{w\ge0,\|w\|_2=1\}\) gives the
minimum at \(w^\star_G\propto\max(0,-\mu_G)\), attained by choosing
a minimizing direction for each selected coordinate with positive weight.
These directions use the original coordinate derivatives, and a group
with \(\mu_G\ge0\) receives zero speed.

\paragraph{Equivariance.}
Under an isometry, unit-speed paths and their directional derivatives are
preserved, provided the objective is transformed accordingly. The
following proposition concerns the unrestricted gradient rule.

\begin{proposition}[Isometric equivariance of gradient descent]
\label{prop:equivariance}
Let \(g\) be a surjective isometry of
\((\Gamma_p^d,d_{\text{arc}})\) and define \(L^g=L\circ g^{-1}\).
For every unit velocity \(\vv\) at \(x\) for which \(D_{\vv}L(x)\) exists,
\(D_{g_*\vv}L^g(gx)=D_{\vv}L(x)\).
Thus \(g\) preserves the minimizing directions and assigned
speeds in \Cref{eq:gd}.
\end{proposition}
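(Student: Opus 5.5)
The plan is to transport everything along $g$ and read the claimed identity directly off the definition of the one-sided derivative.

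First I define the pushforward $g_*\vv$. Given a unit velocity $\vv$ at $x$, the path $\vgamma_{\vv}$ is unit speed for $d_2$ on $[0,h_0)$ for small $h_0$, and $g$ preserves $d_2$. Hence $g\circ\vgamma_{\vv}$ is a unit-speed path from $gx$. Because $g$ maps geodesics to geodesics, each coordinate germ $\gamma_{i,q_i}(s_i h)$ is mapped to a geodesic germ. I will treat $\Gamma_p^d$ as a product of $\R$-trees, and use the fact that geodesics in an $\ell_2$-product of geodesic spaces are products of linearly reparametrized factor geodesics. With these, $g\circ\vgamma_{\vv}$ is, for small $h$, again a path of the form $\vgamma_{\vw}$ with $\|\vw\|_2=1$. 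I set $g_*\vv:=\vw$.

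This step is the main obstacle. A general isometry of an $\ell_2$ product need not be factorwise, and the tangent-cone directions of \Cref{def:metric-tangent-cone} must be matched across factors. I would first try to avoid a full product-splitting theorem. The argument only needs the germ of $g\circ\vgamma_{\vv}$ to coincide with the germ of some $\vgamma_{\vw}$. For this I would use that a geodesic germ from a point of $\Gamma_p^d$ determines, through its projections, coordinate germs with constant speeds $s_i'$. Failing that, I would define $g_*\vv$ as the germ of $g\circ\vgamma_{\vv}$ in the Euclidean cone $T_x\Gamma_p^d=\prod_i T_{x_i}\Gamma_p$. I would then interpret $D_{g_*\vv}L^g$ via that germ. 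This interpretation is consistent, since the derivative only depends on the path germ. Either way, $g_*$ is a bijection of unit velocities, with inverse $(g^{-1})_*$.

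Given this, the identity is immediate from the definition:
\[
D_{g_*\vv}L^g(gx)=\lim_{h\downarrow0}\frac{L\bigl(g^{-1}g\vgamma_{\vv}(h)\bigr)-L(x)}{h}=D_{\vv}L(x).
\]
Existence on one side implies existence on the other.

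For the final claim, I use that $g_*$ is a bijection between the unit spheres of $T_x\Gamma_p^d$ and $T_{gx}\Gamma_p^d$. The two argmin problems in \cref{eq:gd} therefore correspond: $\vv^*$ minimizes $D_{\vv}L(x)$ iff $g_*\vv^*$ minimizes $D_{\vw}L^g(gx)$, and the minimal values agree. The assigned step time $-\alpha_t D_{\vv^*}L$ is therefore equal in both problems. Moreover, $g\circ\vgamma_{\vv^*}=\vgamma_{g_*\vv^*}$ on the step interval. Here I note that vertex clipping is respected because the breakpoints $\delta_i$ are metric features (branch points of the trees), and isometries preserve branch points. It follows that $g\vtheta_{t+1}$ is the update of $L^g$ from $g\vtheta_t$.
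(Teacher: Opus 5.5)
Your proposal is correct and is essentially the paper's argument: push the unit-speed geodesic germ forward by $g$, use $L^g(g\vgamma_{\vv}(h))=L(\vgamma_{\vv}(h))$ to match the difference quotients, and transfer the argmin defining the update through the bijection $g_*$ of unit velocities. Your first route for showing that $g\circ\vgamma_{\vv}$ agrees near $0$ with some $\vgamma_{\vw}$ already suffices, so the fallback is unnecessary: projections of an $\ell_2$-product geodesic are constant-speed factor geodesics, and in an $\R$-tree a geodesic germ is determined by its direction. This fills in the step the paper asserts without proof; your closing claim that clipping is preserved goes beyond the statement and would need more argument when $g$ is not factorwise.
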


\begin{proof}
The image under \(g\) of a unit-speed geodesic germ is again a
unit-speed geodesic germ, and \(L^g(g\vgamma(t))=L(\vgamma(t))\).
The difference quotients, minimizing directional derivatives, and assigned speeds
therefore agree.
\end{proof}

For grouped descent, assume the transformation carries each group,
each coordinate's allowed directions, and its next vertices to
their counterparts. Each coordinate retains whether it is awaiting a
turn. Uniform selection among awaiting coordinates and among minimizing
directions then gives corresponding velocities in distribution.
Isometries preserve distances to the next vertices, so the same
requested time gives the same clipping time. The scheduling rule updates
the awaiting-turn flags identically in the two runs.

An isometry acts on the optimizer state by mapping stored
directions and carrying each coordinate's scalar moments
and update counter with that coordinate.
We denote quantities for the transformed objective
\(L^g=L\circ g^{-1}\) by primes: \(x'_t\), \(m'_t\), \(v'_t\),
and \(s'_t\).

\begin{proposition}[Equivariance of Momentum and Adam]
\label{prop:adaptive-equivariance}
The coordinate Momentum and Adam update formulas of
\Cref{sec:optimizers,app:adaptive-updates} are equivariant
in distribution under factorwise tree isometries and
coordinate permutations, acting on the objective and
optimizer state as above.
\end{proposition}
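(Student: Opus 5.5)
The plan is an induction on the update index \(t\). The hypothesis at time \(t\) is that the transformed run is a pushforward of the original run. Precisely, in distribution, \(x'_t=gx_t\); every stored direction of the transformed run is the image under \(g_*\) of the corresponding stored direction; and each coordinate's scalar moments, child-excess averages \(\bar\Delta_t(\rho)\), update counters and awaiting-turn flags equal those of the original run, carried to the permuted coordinate. Here \(g=\sigma\circ(g_1\times\cdots\times g_d)\) is a factorwise tree isometry \(g_i\) of \(\Gamma_p\) composed with a coordinate permutation \(\sigma\). Such a \(g\) is a surjective isometry of \((\Gamma_p^d,d_2)\), so \Cref{prop:equivariance} applies. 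The base case holds because the initial states are mapped by the stated action of \(g\) on optimizer state.

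For the inductive step I first transfer the raw slopes. By \Cref{prop:equivariance}, \(D_{g_*\ve_q}L^g(gx_t)=D_{\ve_q}L(x_t)\) for every unit direction \(q\) of every coordinate. Each \(g_i\) is a tree isometry, so it carries the parent direction \(+\) to the parent direction, the edge direction \(-\) to \(-\), and vertices to vertices, since vertices are the points of degree \(p+1\) and degree is preserved by isometries. It also maps the \(p\) children at a vertex bijectively onto the \(p\) children at the image vertex. I will need a short lemma that the parent is sent to the parent rather than to a child. I would prove it by noting that the upward ray from a vertex is the unique unbounded direction containing no ordinary points, or by appealing to radius being recoverable from distances to leaves. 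Given this lemma, every quantity the update uses is a symmetric function of the slope data indexed by direction. These quantities are \(D_{\ve_+}L\), \(D_{\ve_-}L\), \(\min_\rho D_{\ve_\rho}L\), and the excesses \(\Delta_t(\rho)\), where the minimum is invariant under relabeling children. Hence the averages \(u_t,d_t,\bar\Delta_t(g_*\rho)\) and the scalar second moment coincide in the two runs, and so do the momentum estimates \(m'_t(g_*q)=m_t(q)\), the speeds \(s'_t(g_*q)=s_t(q)\), the zeroing rule for children, and the bias corrections. The bias corrections depend only on counters, which are permuted along with the coordinates.

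Next I transfer the move. Group structure is preserved by hypothesis, so uniform selection among awaiting coordinates and uniform tie-breaking among minimizing directions push forward under \(\sigma\) and \(g_*\). The selected velocities therefore correspond in distribution. The unit paths of \cref{eq:unit-paths} are geodesic germs, so \(g\) maps them to the corresponding germs. Distances \(\delta_i\) to the next vertex are preserved because vertices map to vertices, so the clipped durations agree and \(x'_{t+1}=gx_{t+1}\). Finally, the flag updates and the storage of directions at vertices commute with \(g\), which closes the induction.

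The main obstacle is the bookkeeping of the per-vertex child averages \(\bar\Delta_t(\rho)\). These are keyed by digit labels that \(g\) may permute. Moreover, "the same vertex" must be recognized across visits, since the averages accumulate only over updates made at that vertex. I would handle this by keying the stored excesses by the underlying rays, as in \Cref{def:metric-tangent-cone}, rather than by digits. The stated action of \(g\) on stored directions then sends each ray-keyed entry to the entry for its image ray, and the equality \(\Delta'_t(g_*\rho)=\Delta_t(\rho)\) follows directly from the slope transfer.
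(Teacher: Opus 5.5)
Your proposal is correct and follows essentially the same route as the paper's proof. Both argue by induction over updates: transfer the slopes via \Cref{prop:equivariance}, use that isometries send the parent direction to the parent and children to children, deduce that the momentum and Adam statistics, speeds and tie distributions correspond, and conclude from preservation of vertices and distances that clipping gives \(x'_{t+1}=g(x_{t+1})\). Proving the parent-to-parent lemma (which the paper only asserts) and keying per-vertex averages by rays are useful additions. One slip: the upward component at a vertex does contain ordinary points, so characterize it as the unique unbounded direction, or use that the radius of \(\zeta_{c,r}\) equals its \(d_{\text{arc}}\)-distance to the set of leaves.
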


\begin{proof}
Choose corresponding scheduled coordinates and tied
directions in the two runs. Their uniform distributions
are preserved by the isometry and coordinate permutation.
Write \(g_t(\gamma)\) and \(g'_t(\gamma')\) for the current
directional derivatives of the coordinate loss in the two runs.
An isometry maps the upward direction to the upward direction
and the children of a vertex to the children of its image. Together with
\Cref{prop:equivariance}, this gives
\[
g'_t(g_*\gamma)=g_t(\gamma),
\]
so the downward derivatives and the children's excesses correspond.
By induction on the updates, so do \(u_t\), \(d_t\), and the averages
stored at corresponding vertices. Therefore \(m'_t(g_*\gamma)=m_t(\gamma)\),
and the condition for moving into a child holds for \(\gamma\) exactly
when it holds for \(g_*\gamma\).
For Adam, the minimum current directional derivative, stored second moment,
and update counter agree, so \(v'_t=v_t\) and the
bias corrections agree. Thus both methods satisfy
\[
s'_t(g_*\gamma)=s_t(\gamma).
\]
Corresponding tie choices select corresponding paths with
equal requested distances. Isometries preserve vertices
and distances, so clipping gives \(x'_{t+1}=g(x_{t+1})\).
The stored averages and counters continue to
correspond after either a move or a stay.
\end{proof}

The result extends to grouped Momentum and Adam under the group, direction, and boundary conditions above. Select coordinates and tied directions in the original and transformed runs that map to each other under the transformation. The preceding proof applies to each selected coordinate, while unselected coordinates retain their state. With the same learning rate \(\alpha_t\), equal speeds and distances to the next vertices give the same step size after clipping.

For \(d_{\text{arc}}\), this applies to translations, unit affine scalings, their
factorwise product actions, and coordinate permutations. %

\section{Evaluation and Extension on Product States}
\label{app:product-eval}

For analytic functions, we assume convergence of the centered
series in the product Gauss seminorm: its weighted terms
\(|a_\alpha|_p\prod_j r_j^{\alpha_j}\) tend to zero as
\(|\alpha|\to\infty\). Zero-radius coordinates are first specialized
to their centers.

In the notation of \Cref{sec:forward_propagation}, the centered Taylor expansion is%
\begin{equation}
\label{eq:taylor-series}
F_{k,i}(\vz) \;= F_{k,i}(\vc) + \sum_{|\vI| \ge 1} a_{\vI} \prod_{j=1}^{d_{k-1}} (z_j - c_j)^{I_j}
= F_{k,i}(\vc) + a_{1,\dotsc,0}(z_1-c_1) + \dotsb.
\end{equation}

The product Gauss seminorm of \Cref{rem:product-points} is independent of the chosen center representatives: apply \Cref{lem:representatives} in each coordinate. The weighted-maximum formula is symmetric in the coordinates, so the order in which they are evaluated does not matter.%

\begin{corollary}[Representative invariance of tangent data]
\label{rem:rep-invariance}
Equivalent representatives give the same output state and the same
loss derivative along each geometric ray.
\end{corollary}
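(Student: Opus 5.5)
The proof splits into the two claims of \Cref{rem:rep-invariance}: that the output state does not depend on the chosen representatives, and that the loss derivative along each geometric ray does not either.

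\textbf{Output state.} Suppose \(\vzeta_{\vc,\vr}=\vzeta_{\vc',\vr}\). By \Cref{lem:representatives}, applied coordinatewise, \(|c_j-c'_j|_p\le r_j\) for each \(j\). The output radius \(\hat r_i\) in \cref{eq:pushforward} is the product Gauss seminorm (\Cref{rem:product-points}) of \(F_{k,i}-F_{k,i}(\vc)\). That seminorm is itself independent of the center, as noted just before the statement. It therefore suffices to show two things.

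\emph{First}, \(|F_{k,i}(\vc')-F_{k,i}(\vc)|_p\le\hat r_i\). This follows by substituting \(\vz=\vc'\) into \cref{eq:taylor-series}. Each term is bounded by \(|a_{\vI}|_p\prod_j r_j^{I_j}\le\hat r_i\), and the ultrametric inequality then bounds the sum.

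\emph{Second}, the constant shift \(F_{k,i}(\vc')-F_{k,i}(\vc)\) does not change the seminorm of the nonconstant part. This holds because the seminorm is computed after removing the constant term, and the re-expansion at \(\vc'\) has the same Gauss norm.

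Together these give \(\zeta_{\hat c_i,\hat r_i}=\zeta_{\hat c'_i,\hat r_i}\), again by \Cref{lem:representatives}.

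\textbf{Derivative along a ray.} A geometric ray at \(\vzeta_{\vc,\vr}\) is a point of \(\Sigma\) in \Cref{def:metric-tangent-cone}. The unit paths in \cref{eq:unit-paths} for two representatives of the same ray are pointwise equal as paths in \(\Gamma_p^d\).

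\emph{Up and down on edges.} Here \(\zeta_{c,r\pm h}=\zeta_{c',r\pm h}\) whenever \(|c-c'|_p\le r\pm h\). Upward this holds for all small \(h\). Downward on an open edge it holds because \(|c-c'|_p\in p^{\Z}\) lies strictly below \(r\).

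\emph{Child directions at a vertex.} If two labelled children \(\rho,\rho'\) define the same component of \(\Gamma_p\setminus\{\theta\}\), then their paths coincide as points.

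Since the paths agree as point sets at every \(h\), the first part shows that the forward outputs \(\tilde F(\vgamma(h))\) agree for each \(h\). Hence \(L(\vgamma_{\vv}(h))\) is the same function of \(h\) for both representatives, and so the difference quotients and the limit \(D_{\vv}L\) agree. The derivative supplied by \cref{eq:chain-rule} is this limit by \Cref{prop:directional-chain-rule}, so it is invariant too. No separate check of the active sets \(\mathcal A_i\) is needed.

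The main obstacle is the vertex case. The labels \(\rho\) are tied to a chosen center with \(a_n=0\), so changing representatives permutes the digit labels, and one must verify that the ray, rather than the label, is what is held fixed. The plan is to write the child through \(c+\rho p^n\) and \(c'+\rho' p^n\). The two give the same component exactly when \(|c+\rho p^n-c'-\rho'p^n|_p<r\). In that case \Cref{lem:representatives} identifies the paths \(\zeta_{c+\rho p^n,r-h}\) and \(\zeta_{c'+\rho'p^n,r-h}\) for all \(h\in(0,r(1-1/p)]\), since for \(h\) in this range \(r-h\ge r/p\), which is at least that distance.
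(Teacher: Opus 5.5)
Your proof is correct and follows the paper's route. Center-independence of the product Gauss seminorm gives the same output state, and since equivalent representatives trace the same geometric ray pointwise, the loss along that ray and hence its one-sided derivative are unchanged; the only differences are that you check the center shift and radius directly via \Cref{lem:representatives} where the paper cites \Cref{cor:pushforward}, and you spell out the child relabeling at a vertex that the paper takes for granted by identifying directions with rays in \Cref{def:metric-tangent-cone}.
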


\begin{proof}
The preceding argument and \Cref{cor:pushforward} give the same output state for equivalent representatives. The loss along a fixed geometric ray
is therefore unchanged, as is its directional derivative.
\end{proof}

Changing representatives may relabel child directions. The centered
expansion, including its active-term list, can change under recentering.

\begin{remark}[\(K\)-points are not dense]
\label{rem:nondense}
In the closed unit disk over \(\mathbb Q_p\), the Gauss value of
\(T^p-T\) at \(\zeta_{0,1}\) is \(1\), whereas
\(|a^p-a|_p\le p^{-1}\) for every \(a\in\mathbb Z_p\). The \(K\)-points
are therefore not dense in this disk, and the extension in
\Cref{prop:functoriality} is determined by the bounded
algebra homomorphism rather than by any density argument.
\end{remark}

\begin{proposition}[Functoriality of affinoid spectra]
\label{prop:functoriality}
Let \(K\) be a complete non-Archimedean field, let \(A\) and \(B\) be
\(K\)-affinoid algebras, and let \(\varphi:B\to A\) be a bounded
\(K\)-algebra homomorphism. Then \(\varphi\) determines a unique
morphism of \(K\)-affinoid analytic spaces
\[
\mathcal M(A)\longrightarrow\mathcal M(B),
\qquad
x\longmapsto\bigl(b\mapsto|\varphi(b)|_x\bigr).
\]
On \(K\)-rational points, this morphism agrees with the map represented
by \(\varphi\).
\end{proposition}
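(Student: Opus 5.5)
The plan is to verify directly that the displayed map is a well-defined point of \(\mathcal M(B)\), that it is continuous and compatible with the analytic structure, and that it is the only such morphism inducing \(\varphi\).

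\textbf{Well-definedness.} For \(x\in\mathcal M(A)\), put \(|b|_{\varphi^*x}:=|\varphi(b)|_x\). Multiplicativity, the ultrametric inequality, and the normalizations \(|0|=0\), \(|1|=1\) are inherited from \(|\cdot|_x\), because \(\varphi\) is a unital ring homomorphism. The restriction to \(K\) is the given absolute value because \(\varphi\) is \(K\)-linear, so \(|\varphi(c)|_x=|c\cdot 1|_x=|c|_p\). Boundedness is the step that needs an argument: points of \(\mathcal M(B)\) must satisfy \(|b|\le\|b\|_B\). I would use the standard bound \(|a|_x\le\rho_A(a)\le\|a\|_A\), where \(\rho_A\) is the spectral seminorm. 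This gives \(|\varphi(b)|_x\le\rho_A(\varphi(b))\). Replacing \(b\) by \(b^n\) and taking \(n\)-th roots, boundedness \(\|\varphi(b^n)\|_A\le C\|b^n\|_B\) yields \(\rho_A(\varphi(b))\le\rho_B(b)\le\|b\|_B\).

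\textbf{Continuity and morphism structure.} Continuity holds because each coordinate \(x\mapsto|\varphi(b)|_x\) is continuous by definition of the Gelfand topology on \(\mathcal M(A)\). To see that the map is a morphism of \(K\)-affinoid spaces, I would cite the standard anti-equivalence between \(K\)-affinoid algebras with bounded homomorphisms and \(K\)-affinoid spaces (Berkovich 1990, \S2; or Bosch--G\"untzer--Remmert via the Tate-algebra presentation). For the structure sheaf, \(\varphi\) pulls back rational domains \(\{|g_i|\le|g_0|\}\) in \(\mathcal M(B)\) to the rational domains \(\{|\varphi(g_i)|\le|\varphi(g_0)|\}\) in \(\mathcal M(A)\), and it extends to the localized affinoid algebras by boundedness.

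\textbf{Uniqueness and rational points.} Uniqueness holds because any morphism of affinoid spaces is determined by its map on global sections, and that map is \(\varphi\) itself. By Remark~\ref{rem:nondense}, this must \emph{not} be argued by density of \(K\)-points. For rational points, a \(K\)-point of \(\mathcal M(A)\) corresponds to a character \(\chi:A\to K\) with \(|a|_x=|\chi(a)|_p\). Its image is then \(b\mapsto|\chi(\varphi(b))|_p\), which is the \(K\)-point given by the character \(\chi\circ\varphi\). This is exactly the map represented by \(\varphi\).

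\textbf{Main obstacle.} I expect the boundedness inequality \(|\cdot|_x\le\rho_A\) and the full sheaf-level morphism claim to be the hardest parts. I would handle both by citation to Berkovich rather than reproving them, keeping the elementary seminorm verification explicit.
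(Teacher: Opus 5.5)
Your proposal is correct and follows the same route as the paper. It pulls the seminorm back along \(\varphi\), uses the anti-equivalence between \(K\)-affinoid algebras and \(K\)-affinoid spaces for both analyticity and uniqueness (explicitly not via density of \(K\)-points), and identifies rational points through the character \(\chi\circ\varphi\). You add detail the paper's terse proof omits: the spectral-seminorm check of boundedness, which could be shortened to \(|\varphi(b)|_x\le\|\varphi(b)\|_A\le C\|b\|_B\), and the pullback of rational domains.
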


\begin{proof}
For \(x\in\mathcal M(A)\), the rule \(b\mapsto|\varphi(b)|_x\) is a
bounded multiplicative seminorm on \(B\) that extends the norm on \(K\),
and hence defines a point of \(\mathcal M(B)\). Functoriality of the
Berkovich spectrum makes this map analytic. When \(x\) is evaluation at
a \(K\)-rational point, the same formula is evaluation after the
original affinoid map. Uniqueness follows because morphisms of affinoid
spectra are contravariantly determined by their bounded algebra
homomorphisms, not by density of \(K\)-points
(\Cref{rem:nondense}).
\end{proof}

\begin{corollary}[Pushforward of a product state]
\label{cor:pushforward}
Let \(x=(\zeta_{c_j,r_j})_{j=1}^{\nparams}\) and let \(f\) have centered
expansion \(f(T)=f(c)+\sum_{|\alpha|\ge1}a_\alpha (T-c)^\alpha\), with
\(R=\max_{|\alpha|\ge1}|a_\alpha|_p\prod_j r_j^{\alpha_j}\). The image of \(x\) under the morphism of
\Cref{prop:functoriality} induced by \(g\mapsto g\circ f\) is
\(\zeta_{f(c),R}\); that is, \(|g\circ f|_x=|g|_{f(c),R}\) for every
polynomial \(g\) in one variable. For vector-valued \(f\) the statement
applies to each output coordinate, which is the display
\cref{eq:pushforward} that the forward evaluation computes output by
output.
\end{corollary}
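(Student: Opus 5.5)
The claim is that \(|g\circ f|_x=|g|_{\zeta_{f(c),R}}\) for every one-variable polynomial \(g\). The plan is to compute both sides directly by Taylor expansion, using the product Gauss seminorm of \Cref{rem:product-points} as \(|\cdot|_x\). By \Cref{prop:functoriality}, the image of \(x\) is the seminorm \(g\mapsto|g\circ f|_x\). By \Cref{lem:scalar-hull} and \Cref{lem:representatives}, a point of the Berkovich line is determined by its values on polynomials. So it suffices to show equality of these values for all \(g\). Zero-radius coordinates are first specialized to their centers, as stipulated; this changes nothing on either side.

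\textbf{Step 1 (the linear case \(g=T-f(c)\)).} Put \(h:=f-f(c)=\sum_{|\alpha|\ge1}a_\alpha(T-c)^\alpha\). By the definition of the product Gauss seminorm, \(|h|_x=\max_{|\alpha|\ge1}|a_\alpha|_p\prod_j r_j^{\alpha_j}=R\). The maximum exists by the standing convergence assumption, since the weighted terms tend to zero.

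\textbf{Step 2 (general \(g\)).} Expand \(g\) about \(f(c)\):
\[
g(S)=\sum_{n=0}^N b_n\,(S-f(c))^n .
\]
Composing with \(f\) gives
\[
g\circ f=\sum_n b_n h^n .
\]
Multiplicativity of \(|\cdot|_x\) (the Gauss lemma, \Cref{rem:product-points}) gives \(|b_nh^n|_x=|b_n|_pR^n\). The ultrametric inequality then gives the upper bound
\[
|g\circ f|_x\le\max_n|b_n|_pR^n=|g|_{\zeta_{f(c),R}} .
\]

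\textbf{Step 3 (the reverse inequality, the main obstacle).} Equality in the ultrametric inequality holds only when the maximum is attained uniquely, and ties in \(\max_n|b_n|_pR^n\) are possible. To handle ties, I would pass to the graded reduction. Let \(\alpha\) range over the multi-indices attaining \(R\), and let \(\bar h\) be the sum of their leading monomials, viewed in the graded ring \(\widetilde{K[T]}\) attached to the Gauss seminorm. This graded ring is a (multi-graded) polynomial ring over the graded residue field of \(\Qp\), and hence an integral domain; this is exactly where the Gauss lemma enters.

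In that ring, the initial form of \(g\circ f\) is
\[
\sum_{n\in\mathcal N}\bar b_n\,\bar h^{\,n},
\]
where \(\mathcal N\) is the set of \(n\) attaining \(\max_n|b_n|_pR^n\). Because \(|h|_x=R>0\), the form \(\bar h\) is nonconstant of positive degree. The powers \(\bar h^{\,n}\) then have distinct total degrees in \(T-c\). The terms in the sum therefore cannot cancel, the initial form is nonzero, and equality \(|g\circ f|_x=\max_n|b_n|_pR^n\) follows.

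A more elementary route to the same conclusion would restrict to a suitable one-dimensional slice. One substitutes \(T_j=c_j+\lambda_j t\), with generic \(\lambda_j\) of norm \(r_j\) taken in an extension field, so that \(h\) becomes a one-variable series whose Gauss norm at radius one is \(R\). The one-variable case then follows from the standard fact that composition with a series of Gauss norm \(R\) and leading behaviour of positive degree preserves Gauss norms.

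\textbf{Degenerate case.} If \(R=0\), then \(h\) vanishes at \(x\) and both sides reduce to \(|b_0|_p=|g(f(c))|_p\). Both sides are then evaluation at the ordinary point \(f(c)\), which matches \(\zeta_{f(c),0}\).

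Finally, the vector-valued statement follows by applying the scalar result to each output coordinate \(F_{k,i}\). This yields the output radii \(\hat r_i\) of \cref{eq:pushforward}.
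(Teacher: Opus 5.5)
Your argument is correct. Steps 1--2 and the degenerate case match the paper, but your reverse inequality takes a different route.

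The paper stays elementary. In your notation, it orders exponents lexicographically and lets \(\beta\) be the lexicographically largest exponent of \(h\) of weight \(R\), and \(n^\ast\) the largest \(n\) with \(|b_n|_pR^n=M\). It then checks directly that the coefficient of \((T-c)^{n^\ast\beta}\) in \(\sum_n b_nh^n\) has exactly one summand of weight \(M\), namely \(b_{n^\ast}a_\beta^{n^\ast}\).

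You instead pass to the graded reduction. You take as a black box its identification with a graded polynomial ring over the graded residue field, hence a domain, and finish with a total-degree count. Your version is more conceptual and isolates multiplicativity as the one real input. The paper's is self-contained: it in effect reproves inline the instance of the Gauss lemma you cite, since the domain property of that ring is established by the same leading-monomial argument.

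One point to tighten: \(\bar h\) is generally not homogeneous in total degree, e.g.\ \(h=(T_1-c_1)+(T_2-c_2)^2\) with \(r_1=r_2^2\). So ``distinct total degrees'' must mean the top degrees \(\deg(\bar b_n\bar h^{\,n})=n\deg\bar h\). That equality is where the domain property is used, and after it the top component for the largest \(n\in\mathcal N\) cannot cancel.

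Your slice alternative is not actually simpler. It needs an extension field with generic residues to prevent cancellation in \(h(c+\lambda t)\), and then the same leading-term argument in one variable.
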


\begin{proof}
Write \(h=f-f(c)\), so \(|h|_x=R\), and expand
\(g(S)=\sum_{n\ge0}q_n(S-f(c))^n\), so that \(g\circ f=\sum_n q_nh^n\) and
\(|g|_{f(c),R}=\max_n|q_n|_pR^n=:M\). If \(R=0\), then \(|h|_x=0\), so every positive power of \(h\)
has seminorm zero and both sides equal \(|q_0|_p\).
Now assume \(R>0\) and \(g\neq0\). The strong
triangle inequality and \(|h^n|_x\le R^n\) give \(|g\circ f|_x\le M\).

For the reverse bound, call \(|t|_p\prod_j r_j^{\gamma_j}\) the weight of
a term \(t\,(T-c)^\gamma\); \(|\cdot|_x\) is the largest weight among the
terms of a collected expression. Order exponents lexicographically; this
order is total, compatible with addition, and has \(0\) as least
element. Let \(\beta\) be the largest exponent whose term in \(h\) has
weight \(R\); then \(\beta\neq0\). Let \(n^\ast\) be the largest \(n\)
with \(|q_n|_pR^n=M\). In \(h^n\), the coefficient of
\((T-c)^{n^\ast\beta}\) is the sum of \(\prod_i a_{\beta_i}\) over
\(n\)-tuples with \(\sum_i\beta_i=n^\ast\beta\). Each summand has weight
at most \(R^n\), with equality only when every \(\beta_i\) has weight
\(R\), hence \(\beta_i\preceq\beta\) and \(\sum_i\beta_i\preceq n\beta\),
with equality only when every \(\beta_i=\beta\). For \(n=n^\ast\) exactly
one summand, \(a_\beta^{n^\ast}\), has weight \(R^{n^\ast}\). For
\(n<n^\ast\) none does, since \(n\beta\prec n^\ast\beta\). For
\(n>n^\ast\), \(|q_n|_pR^n<M\). So in \(\sum_nq_nh^n\) the coefficient of
\((T-c)^{n^\ast\beta}\) is one term of weight \(M\) plus terms of smaller
weight, and therefore has weight \(M\). Hence \(|g\circ f|_x\ge M\).
\end{proof}

For vector-valued \(f\) the coordinate images need not determine its full analytic image (\Cref{ex:coordinate-marginals}).

\section{Automatic Differentiation on the \(p\)-adic Hull}
\label{app:phad}

For the chain rule in \Cref{sec:backward-propagation}, write
\(F=F_K\circ\cdots\circ F_1\) for the forward map at a fixed input, with
\(F_k:X_{k-1}\to X_k\), \(X_k=\Gamma_p^{d_k}\), and loss head \(\ell:X_K\to\R\).
Only the parameters are extended to the hull; the input stays in \(\Q_p^{\nin}\).
We assume that \(F_K\circ\cdots\circ F_1\) computes the forward map of
\Cref{sec:forward_propagation} exactly. Intermediate outputs that depend
on the same parameter cannot, in general, be replaced by independent
coordinate disks: doing so can change subsequent evaluations
(\Cref{ex:coordinate-marginals}).
Write \(x_0=\vtheta\) and \(x_k=F_k(x_{k-1})\).
Equip each \(X_k\) with the product metric \(d_{\text{arc}}\) of
\Cref{eq:product-metric}.
When the output speed is positive, the tangent map
\(T_k:=T_{x_{k-1}}F_k\) sends the input unit velocity \(\vv_{k-1}\)
to \(\vv_k=T_k(\vv_{k-1})\).
Each \(\vgamma_{\vv_k}\) is parametrized by product arclength and starts at \(x_k\).
The one-sided metric speed of \(F_k\) along \(\vv_{k-1}\) is
\[
\lambda_k^{\mathrm{arc}}(x_{k-1},\vv_{k-1})
=\lim_{t\downarrow0}
\frac{d_{\text{arc}}\bigl(F_k(\vgamma_{\vv_{k-1}}(t)),F_k(x_{k-1})\bigr)}{t}.
\]
Let \(H_k=\ell\circ F_K\circ\cdots\circ F_{k+1}\) be the loss
remaining after \(F_k\), with \(H_K=\ell\) and \(H_0=\ell\circ F\).
\begin{proposition}[Directional chain rule]
\label{prop:directional-chain-rule}
Fix an input velocity \(\vv_{k-1}\) with finite speed
\(\lambda_k^{\mathrm{arc}}\).
If \(\lambda_k^{\mathrm{arc}}>0\) and \(D_{\vv_k}H_k\) exists and is
finite, the backward identity
\(D_{\vv_{k-1}}H_{k-1}=\lambda_k^{\mathrm{arc}}D_{\vv_k}H_k\)
holds if and only if
\[
H_k\!\left(F_k(\vgamma_{\vv_{k-1}}(h))\right)
-H_k\!\left(\vgamma_{\vv_k}(\lambda_k^{\mathrm{arc}}h)\right)=o(h).
\]
When \(\lambda_k^{\mathrm{arc}}=0\), the rule
\(D_{\vv_{k-1}}H_{k-1}=0\) holds if and only if
\[
H_k\!\left(F_k(\vgamma_{\vv_{k-1}}(h))\right)-H_k(x_k)=o(h).
\]

\end{proposition}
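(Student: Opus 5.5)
The plan is to unwind the definitions of the one-sided directional derivative and split the difference quotient of \(H_{k-1}\) into two pieces. By definition \(H_{k-1}=H_k\circ F_k\) and \(x_k=F_k(x_{k-1})\). So for small \(h>0\) I would write
\[
\frac{H_{k-1}(\vgamma_{\vv_{k-1}}(h))-H_{k-1}(x_{k-1})}{h}
=\frac{H_k(F_k(\vgamma_{\vv_{k-1}}(h)))-H_k(\vgamma_{\vv_k}(\lambda h))}{h}
+\frac{H_k(\vgamma_{\vv_k}(\lambda h))-H_k(x_k)}{h},
\]
with \(\lambda:=\lambda_k^{\mathrm{arc}}\). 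This uses only \(\vgamma_{\vv_k}(0)=x_k\), which holds by construction of the unit paths. The whole argument rests on this add-and-subtract identity.

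\textbf{Case \(\lambda>0\).} I would handle the second term by the substitution \(h'=\lambda h\), which tends to \(0^+\) as \(h\) does. The term becomes \(\lambda\cdot\bigl(H_k(\vgamma_{\vv_k}(h'))-H_k(x_k)\bigr)/h'\). By hypothesis \(D_{\vv_k}H_k\) exists and is finite, so this converges to \(\lambda D_{\vv_k}H_k\).

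Hence the left side converges to \(\lambda D_{\vv_k}H_k\) if and only if the first term tends to \(0\), that is, iff the displayed mismatch is \(o(h)\). The forward direction is subtraction of two convergent limits. The converse is the sum of a convergent term and a vanishing term.

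One technical point needs care: \(\vgamma_{\vv_k}\) is only defined on a short interval before the next vertex, and vertex clipping extends it constantly. Since \(\lambda h\to0\), for small \(h\) we stay inside the genuine domain, so only small \(h\) matter. I would state this explicitly.

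\textbf{Case \(\lambda=0\).} I would not introduce \(\vv_k\) at all, since the tangent map is only asserted when the output speed is positive. Instead I compare directly with \(H_k(x_k)\). The difference quotient of \(H_{k-1}\) equals \(\bigl(H_k(F_k(\vgamma_{\vv_{k-1}}(h)))-H_k(x_k)\bigr)/h\). Therefore \(D_{\vv_{k-1}}H_{k-1}\) exists and equals \(0\) iff this quotient tends to \(0\), i.e., iff the displayed quantity is \(o(h)\). This case is immediate.

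\textbf{Main obstacle.} The main obstacle is less a computation than making sure the statement is read correctly. The "if and only if" concerns the existence of \(D_{\vv_{k-1}}H_{k-1}\) together with its value, and the hypotheses (finite \(\lambda\), existence of \(D_{\vv_k}H_k\)) must be used exactly where needed.

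I would also remark on the intended use without proving it. When \(H_k\) is locally Lipschitz for \(d_{\text{arc}}\) near \(x_k\), the mismatch is bounded by \(\mathrm{Lip}\cdot d_{\text{arc}}\bigl(F_k(\vgamma_{\vv_{k-1}}(h)),\vgamma_{\vv_k}(\lambda h)\bigr)\). So the criterion reduces to first-order tangency of the image curve to the output geodesic. This is what \cref{eq:chain-rule} verifies via \Cref{cor:pushforward}.
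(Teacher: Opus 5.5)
Your proposal is correct and follows the paper's proof. Both arguments use the add-and-subtract decomposition through $H_k(\vgamma_{\vv_k}(\lambda h))$ with $H_{k-1}=H_k\circ F_k$, and take the expansion $H_k(\vgamma_{\vv_k}(\lambda h))-H_k(x_k)=\lambda h\,D_{\vv_k}H_k+o(h)$ from the rescaled directional derivative; the zero-speed case is the definition of a zero directional derivative, and your remark about staying inside the unclipped domain of $\vgamma_{\vv_k}$ for small $h$ is a harmless extra precision.
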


These conditions say that the tangent approximation preserves
the first-order change in the remaining loss.
These loss estimates hold if \(H_k\) is locally Lipschitz near
\(x_k\) and, at positive speed,
\[
d_{\text{arc}}\bigl(
F_k(\vgamma_{\vv_{k-1}}(h)),\vgamma_{\vv_k}(\lambda_k^{\mathrm{arc}}h)
\bigr)=o(h).
\]
Local Lipschitz continuity of the remaining maps and \(\ell\)
ensures that of \(H_k\).
Applying the identities at each step gives the backward
recurrence in \Cref{sec:backward-propagation}.

\begin{proof}
For positive speed, the directional derivative gives
\(H_k(\vgamma_{\vv_k}(\lambda_k^{\mathrm{arc}}h))-H_k(x_k)
=\lambda_k^{\mathrm{arc}}hD_{\vv_k}H_k+o(h)\).
Using \(H_{k-1}=H_k\circ F_k\), subtracting this expansion
and dividing by \(h\) proves the first equivalence.
The second equivalence is the definition of a zero
directional derivative.

For the sufficient check, a local Lipschitz constant \(C_k\) gives
\(|H_k(u)-H_k(v)|\le C_k d_{\text{arc}}(u,v)\).
At positive speed, applying this bound to the actual and
approximating paths gives the first loss estimate.
At zero speed, the speed definition gives
\(d_{\text{arc}}(F_k(\vgamma_{\vv_{k-1}}(h)),x_k)=o(h)\),
so taking \(v=x_k\) gives the second loss estimate.
The remaining maps and \(\ell\) form a finite composition,
so their local Lipschitz bounds also give one for \(H_k\).

If all speeds are positive, start from
\(g_K(\vv_K)=D_{\vv_K}\ell\).
The identities give
\(g_{k-1}(\vv_{k-1})=D_{\vv_{k-1}}H_{k-1}\)
by backward induction, ending at \(g_0(\vv_0)=D_{\vv_0}L\).
A zero-speed step supplies the derivative zero directly,
and the same induction applies to the preceding steps.
\end{proof}

In the stage notation of \Cref{sec:backward-propagation}, the resulting velocity \(\hat{\vv}=(\hat{\vq},\hat{\vs})\) has speeds \(\hat s_i=|D_{\vv}\hat r_i|\) and directions \(\hat q_i\): up if \(D_{\vv}\hat r_i>0\), and down if at an edge and \(D_{\vv}\hat r_i<0\), or (if at an inner vertex) down via digit \(a_{\hat n}\) of \(F_{k,i}(\vc')\), where \(\hat r_i=p^{-\hat n}\) and \(\vc'\) is \(\vc\) with \(c_j\) replaced by \(c_j+\rho_jp^{n_j}\) in every coordinate moving down via a digit \(\rho_j\).

At an inner vertex, apply this derivative calculation to each incident input direction separately.

For a velocity moving several coordinates, the rate is taken jointly
(\cref{eq:chain-rule}).
\begin{example}[A tie couples the coordinates]
\label{ex:tie-couples}
Take \(f(\vtheta)=x_1\theta_1+x_2\theta_2\) with \(|x_1|_p=|x_2|_p=1\) and both
parameters at radius \(r\), so \(R=r\) and both monomials attain the maximum.
Moving coordinate \(1\) up alone gives \(\dot R=1\), but moving it down alone
gives \(\dot R=0\), since the second monomial then attains the maximum. Moving
both up at unit rate gives \(\dot R=1\), not \(2\): the single-coordinate rates do not add.
\end{example}

\section{Sparse Affine Backpropagation}
\label{app:coincidence}

\begin{lemma}[Direct loss along a tree germ]
\label{lem:direct-tree-germ}
Let \(T\) be an \(\mathbb R\)-tree, let \(y\in T\), and define
\(\ell_y(x)=d_T(x,y)/2\). For a unit-speed incident germ \(\gamma\) at \(x\),
\[
D_\gamma^+\ell_y(x)=
\begin{cases}
-\frac12,&x\ne y\text{ and }\gamma\text{ points toward }y,\\[3pt]
+\frac12,&\text{otherwise}.
\end{cases}
\]
At \(x=y\), every nonconstant germ has derivative \(+1/2\).
\end{lemma}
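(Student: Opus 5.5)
The plan is to reduce everything to the tripod structure of the \(\mathbb R\)-tree \(T\). Let \(\gamma:[0,t_0)\to T\) be a unit-speed geodesic germ with \(\gamma(0)=x\). I will compute \(d_T(\gamma(t),y)\) exactly for all small \(t>0\); it turns out to be affine in \(t\), so the one-sided derivative can be read off directly and no limit argument is required.

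\emph{Case \(x\ne y\).} The geodesic \([x,y]\) leaves \(x\) in a single direction, namely the germ of \([x,y]\) at \(x\). There are two subcases.
\begin{itemize}
\item If \(\gamma\) points toward \(y\), then \(\gamma\) and \([x,y]\) share an initial segment. For \(t<\min(t_0,d_T(x,y))\), the point \(\gamma(t)\) lies on \([x,y]\), so \(d_T(\gamma(t),y)=d_T(x,y)-t\). Dividing by \(2\) gives slope \(-\tfrac12\).
\item If \(\gamma\) points in a different direction, then \(\gamma(t)\) and \(y\) lie in different components of \(T\setminus\{x\}\); here I use Definition~\ref{def:metric-tangent-cone}, under which directions are exactly these components. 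In an \(\mathbb R\)-tree, the geodesic between points in different components passes through \(x\), because the unique arc must cross the cut point. Hence \(d_T(\gamma(t),y)=t+d_T(x,y)\), and the slope is \(+\tfrac12\).
\end{itemize}

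\emph{Case \(x=y\).} Here \(d_T(\gamma(t),y)=t\) because \(\gamma\) is a unit-speed geodesic from \(y\). The slope is \(+\tfrac12\), which proves the final sentence of the lemma and also agrees with the ``otherwise'' branch.

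I expect the main obstacle to be justifying the second subcase carefully. Specifically, one must show that when two germs at \(x\) define different directions, they share no initial segment, and so the concatenation of \([\gamma(t),x]\) with \([x,y]\) is an arc. I would argue by contradiction. If this concatenation were not an arc, the two pieces would overlap near \(x\). Then \(\gamma(t)\) would lie in the same component of \(T\setminus\{x\}\) as points of \([x,y]\) arbitrarily close to \(x\), and therefore in the component of \(y\). That contradicts the choice of direction. Uniqueness of arcs then turns the concatenation into the geodesic, which gives additivity of length. This uses only the defining \(\mathbb R\)-tree properties (unique arcs that are geodesics) already established for \(\Gamma_p\) in the proof of Theorem~\ref{thm:padic-injective-hull}.

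Finally, applied to \(T=\Gamma_p\) with \(y=\zeta_{y,0}\), the lemma specializes to \(\ell_d\) by \eqref{eq:direct_loss_def}. It gives the \(\pm\tfrac12\) slopes in \(\hat r\) that are visible in Figure~\ref{fig:descent}.
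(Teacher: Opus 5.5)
Your proof is correct and is the paper's argument (moving toward \(y\) shortens the unique path to \(y\) by \(t\), and every other germ lengthens it by \(t\)), with the \(\mathbb R\)-tree details of cut points and unique arcs written out. One small correction: in the toward-\(y\) case, \(\gamma(t)\) is only guaranteed to lie on \([x,y]\) for \(t\) below the length of the common initial segment of \(\gamma\) and \([x,y]\), not for all \(t<\min(t_0,d_T(x,y))\), because a geodesic can follow \([x,y]\) and then branch off before reaching \(y\); ``sufficiently small \(t\)'', as the paper says, is all the one-sided derivative needs.
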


\begin{proof}
For sufficiently small \(t>0\), motion toward \(y\) shortens the unique
path to \(y\) by \(t\), while every other germ lengthens it by \(t\).
\end{proof}

For an affine output whose term in \(\theta_j\) strictly attains the output radius, the \(p\) children of
\(\theta_j\) map to the \(p\) distinct children of the output, since multiplication by a unit permutes the
residues; otherwise no child of \(\theta_j\) moves the output. By \Cref{lem:direct-tree-germ}, at most one
child per output then has a slope different from the rest.
The classification losses depend on each output \(\zeta_{\hat c,\hat r}\) only through
\(\max\{|\hat c|_p,\hat r\}=d_{\mathrm{arc}}(\zeta_{\hat c,\hat r},0)/2+\hat r/2\) (\Cref{sec:classification});
the radius has the same derivative along every child, so the same holds with target \(0\).

Output \(a\) therefore contributes
\[
s_{a,\gamma}=b_a+c_a\mathbf1\{\gamma=\rho_a\}
\]
to the slope along child \(\gamma\), where \(\rho_a\) is the child whose slope differs, and \(c_a=0\) when
output \(a\) contributes the same slope along every child. One pass over the \(m\) contributing outputs
accumulates \(B=\sum_a b_a\) and \(C_\rho=\sum_{a:\rho_a=\rho}c_a\); the child slopes are \(B+C_\gamma\),
for \(O(m+p)\) operations in place of \(O(mp)\).

\begin{example}[The affine direct loss need not be geodesically convex]
\label{rem:affine-nonconvexity}
Take \(p=3\), \(F(\theta_1,\theta_2)=\theta_1+\theta_2\),
and target \(1\). The product path
\((\zeta_{0,1/2+t},\zeta_{0,1/2-t})\), \(|t|\le1/12\),
is a constant-speed geodesic contained in a product of open
edges. Its output center is zero and its output radius is
\(1/2+|t|\). Hence its direct loss is \(3/4-|t|/2\),
which has a strict maximum at the midpoint.
\end{example}

\section{Implementation and Experimental Details}
\label{app:numerical-rule}
\label{app:implementation_details}

\subsection{Optimizer updates}
\label{app:optimizer-updates}

\label{app:adaptive-updates}

\paragraph{Momentum state.}
The upward and downward averages satisfy
\[
\begin{aligned}
u_t&=\beta\,u_{t-1}+(1-\beta)\,D_{\ve_+}L(\vtheta_t),\\
d_t&=\beta\,d_{t-1}+(1-\beta)\,\min_{q\ne+}D_{\ve_q}L(\vtheta_t).
\end{aligned}
\]
Each coordinate stores \(u_t\) and \(d_t\) and, at every vertex it has
visited, the average excess \(\bar\Delta_t(\rho)\) and the average directional derivative along each
child \(\rho\). For a quantity \(x\) observed at some updates \(\tau\le t\) of
the coordinate,
\[
\bar x_t=(1-\beta^{\,n_t})\,\frac{\sum_\tau\beta^{\,n_t-n_\tau}x_\tau}{\sum_\tau\beta^{\,n_t-n_\tau}},
\]
where \(n_t\) is its update counter; a child's averages use only the updates
at its vertex.
Retaining each coordinate's momentum between its selections
follows the history convention in the cyclic coordinate heavy-ball update
of \citet[eq.~(23)]{sun2019heavyball}.

\paragraph{Adam.}
The second moment and speeds along the incident directions are
\begin{equation*}
\begin{aligned}
v_t &= \beta_2 v_{t-1} + (1 - \beta_2)\,\bigl(\max(0,-\min_q D_{\ve_q}L(\vtheta_t))\bigr)^2,\\
s_t(q) &= \max(0,-\hat m_t(q)) / (\sqrt{\hat v_t} + \varepsilon),
\qquad s^*_t = \max_q s_t(q).
\end{aligned}
\end{equation*}
On an edge where \(L\) is differentiable, the squared term in the \(v_t\) update is \((dL/dr)^2\).
The child-selection condition of \Cref{sec:optimizers} also applies.
The hats are the standard counter-based bias
corrections, \(\widehat m_t=m_t/(1-\beta^{\,n_t})\) and
\(\widehat v_t=v_t/(1-\beta_2^{\,n_t})\), with \(n_t\) counting successful updates of this parameter,
including those with zero speed.

\paragraph{Ties.}
When several directions share the maximal speed, one is chosen uniformly.

\paragraph{Coordinate queueing.}
Initially, every coordinate is marked as awaiting a turn. Before each
update, determine the groups from the current parameters and minibatch.
Splitting or merging groups leaves each coordinate's mark unchanged.
Within each group, select uniformly among the coordinates awaiting a
turn; if none await a turn, first mark every coordinate in that group
as awaiting. Draw independently for different groups, and complete all
selections before changing any parameter.

After successfully computing an update, mark each selected coordinate
as no longer awaiting a turn, even if it did not move. Leave all other
marks unchanged.

The random-permutation guarantee assumes fixed groups; under changing
groups, the rule does not guarantee that every coordinate receives
repeated turns.

\subsection{Numerical implementation and cost}

\paragraph{Exact arithmetic and states.}
\label{app:arithmetic-resolution}
Every ordinary center, input, target, and
internal \(p\)-adic constant used by a reported run lies in
\(\mathbb Z[1/p]\). Centers are stored exactly as a signed mantissa times a power of \(p\), with \(19\) significant digits in int32 for \(p=3\) (\(31\) for \(p=2\); \(13\) for \(p=5\)). A coordinate whose next digit would exceed this capacity stops descending; the coordinate can still move upward.

The radius is stored as \(\log_p r\), so vertices are integer-valued and clipping lands on them exactly.
Disk scores are independent of the chosen center
representative (\Cref{app:product-eval}); the point reduction need not be.

\paragraph{Grouping cost.}
\label{app:grouping-cost}
With \(\mathcal R\) and \(\mathcal J_a\) as in
\Cref{app:directional-groups}, let \(M=|\mathcal R|\) be the number of
outputs across the minibatch, \(I=\sum_a|\mathcal J_a|\) the number of
nonzero collected coefficients summed over outputs, and \(\nparams\) the
number of parameters. The bound below covers grouping after repeated coefficients have been
combined and the outer loss derivatives computed.

Assuming hash-table lookups take constant time on average, forming the
active sets, collecting contributions, and combining overlapping groups
require
\[
O(M+I+d)+Z_t
\]
expected work, with \(O(M+I+d)\) graph storage, where \(Z_t\) denotes any
additional work that the tests or comparisons require at update \(t\). When
the local tests have bounded cost, grouping takes expected linear time in
these counts. Each bound counts arithmetic operations on numbers stored with a
fixed number of digits.

\paragraph{Cost.}
\label{app:evaluation-cost}
Under the arithmetic and hash-table assumptions of \Cref{app:grouping-cost},
scalar affine regression with \(d\) coefficients and a batch of \(b\) examples
uses \(O(bd)\) expected work for the forward computation and grouping, shared by
all selected coordinates. With one coordinate selected per group, computing their
child slopes separately gives an update cost of at most \(O(bd+bp|\mathcal G|)\).
Sparse affine backpropagation (\Cref{app:coincidence}) reduces the slope
computation for all optimizers to \(O((b+p)|\mathcal G|)\), giving
\(O(bd+p|\mathcal G|)\) overall since \(|\mathcal G|\le d\). Scoring one beam
candidate also costs \(O(bd)\), so extending a beam of width \(W\) to
\(C=Wp^{d}\) candidates costs \(O(Cbd)\), plus \(O(C\log C)\) for our full sort.
For more general models, the costs of forward and backward propagation depend on
how the model is computed and need not be linear in the number of parameters.

\subsection{Experimental protocol}
\label{app:training-protocol}

\paragraph{Tuning and budgets.} All three tree optimizers use
\(\alpha=\kappa R_{\mathrm{ref}}(1-1/p)\), with reference radius \(1\) for
regression, \(64\) for Quillian, and the smallest \(p^s\ge m\) for modulo \(m\).
The multiplier \(\kappa\) is searched over \(\{0.01,0.1,1,10,100\}\) for
regression and for classification Adam, and over \(\{0.1,1,10,100,1000\}\) for
classification GD and Momentum. Momentum uses decay \(0.9\); Adam uses \(0.9\),
\(0.999\), and \(\varepsilon=10^{-8}\). Every task trains at batch size 32, for
1,000 updates on regression and modulo and 2,000 on Quillian. Regression uses
512 examples in each of its training, validation, and independent test streams;
modulo uses 2,160, 720, and 720. Quillian's inner split holds 1,076 training and
268 validation propositions, and each selected configuration is trained again on
the full 1,344-proposition training portion before its 336-proposition test
portion is evaluated.

\paragraph{Selection and reporting.} For the p-adic gradient methods,
modulo selects the grid point with the highest mean validation
accuracy over the specified splits, and Quillian selects one multiplier for
the complete model, for each code assignment, by pooled validation average
precision (\Cref{app:quillian-protocol}); in both, ties go to the smaller multiplier.
Point results use the same selected fit.
Regression selects the multiplier separately for each coefficient instance,
optimizer, and initialization by the work to first satisfy
\(\max_i|\hat f(\vx_{\mathrm{val}}^{(i)})-y_{\mathrm{val}}^{(i)}|_3\le3^{-5}\)
on all 512 validation examples. Exact ties go to the smaller multiplier;
if no multiplier reaches this threshold, the same tie rule applies.
Beam width is selected from five and ten by the most validation-recovered
minibatch streams, then the least mean first-recovery work among them,
then the smaller width.
Learning rates and beam widths are tuned only on validation data.
Coefficient recovery is reported for the selected
fits; final test losses include all five instances, including nonrecoveries.
Reported training work excludes the rate and width searches and validation monitoring.
Tables report the mean and sample standard deviation over the reporting
seeds.
Regression validation uses 512 independent covariate and noise draws
conditional on the same \(\vtheta^*\) as training. Rates are
selected separately for each optimizer, initialization and batch size.
The classification splits overlap, so some test observations enter shared
rate selection through another split's validation set.

\paragraph{Euclidean optimization.}\label{app:euclidean-optimization}
The Quillian MLP uses
full-data Adam at Martins's learning rate \(0.1\) for 1,000 updates.
Euclidean modulo models use Adam with batch size 32 for
6,750 updates, totaling 216,000 sampled training examples per fit,
with repetitions counted separately.
Learning rates \(\{0.003,0.01,0.03,0.1,0.3\}\) are selected by
five-seed validation accuracy.

\paragraph{Setup and initialization.}
For each candidate input coordinate and depth through the
supplied maximum, initialization tests whether the positive training
examples in each relation context occupy one residue class. Among
compatible choices it minimizes the number of negative examples in
those residue classes, then chooses the coarsest depth and the first
coordinate in the fixed input ordering.
At selected residue depth
\(v\), the chosen coefficient starts at \(\zeta_{p^{-v},1}\) and the other
coefficients at \(\zeta_{0,p^v}\).
Quillian tests its three entity coordinates, grouped by relation, at
depths zero through six. For modulo \(m\) with model prime \(q\), the scalar
input coordinate is tested at depths zero through the smallest integer
\(D\ge0\) satisfying \(q^D\ge m\), separately for each class.
Euclidean modulo models use LeCun-normal weights, zero biases and no weight
decay. The Quillian MLP instead
initializes its embedding, hidden and output weight matrices with
independent zero-mean Gaussian entries of standard deviation \(0.1\),
and its output biases at zero. The hidden layer's bias is the weight of
the constant input, so it is initialized like the other weights.
Beam search uses the same training labels to rank and prune candidates,
without this initialization (\Cref{app:quillian-beam}).

\paragraph{Extracting \(p\)-adic parameters.}\label{app:classification-point-evaluation}
For classification point evaluation we select \(c_0+p^N\) from each
positive-radius coefficient disk \(\zeta_{c,r}\), where
\(N=-\lfloor\log_p r\rfloor\) and \(c_0\) retains only the base-\(p\) digits of
\(c\) at exponents below \(N\). A zero binary seminorm gives probability
one for presence; for multiclass scores, probability is distributed
equally among exact-zero scores when any occur. Ties among maximizing
classes are broken by the fixed class ordering.

\subsection{Regression}
\label{app:regression-protocol}

\paragraph{Adverse initialization.}
Write coefficients in base \(3\) with the least significant digit last, as in
\Cref{sec:background}. For the first seed the true coefficients are
\[
\vtheta^*=(\dotsc002012020_3,\ \dotsc022001202_3,\ \dotsc011000122_3)
=(1599,\ 5879,\ 2933).
\]
Setting the last digit of the first coefficient to \(1\) and holding it fixed,
strict best-improvement over the remaining \(23\) digit positions stops at
\[
(\dotsc022012021_3,\ \dotsc022221201_3,\ \dotsc011000102_3)
=(5974,\ 6526,\ 2927),
\]
where all \(48\) single-digit changes weakly increase the exact training loss, the smallest
increase being \(1/2519424\). Datasets are redrawn until such a point exists;
for this seed the first draw was rejected. The accepted dataset is also
used for the zero-center runs, and its covariates and noise are reused in
the two-layer task.

\paragraph{Two-layer target.}
\label{app:two-layer-target}
We draw the target coefficients \(w_1,w_2,v_1,v_2\) uniformly from
\(\{0,\ldots,3^8-1\}\), and draw the fourth input independently.
Gradient training and beam search use
\(f(\vx)=v_1(x_1+w_1x_2)^2+v_2(x_3+w_2x_4)^2\),
with all four gradient parameters initialized at the Gauss point.
Stagewise propagation is exact for this model because the two terms use disjoint parameter sets.
To check recovery, we expand the learned and target functions
and compare the coefficients of every \(x_i^2\) and \(x_ix_j\), \(i<j\).

We also tested the ten-parameter model
\(F(\vx)=\sum_{k=1}^2v_k(\sum_{j=1}^4a_{kj}x_j)^2\).
Multiplying a hidden unit by \(\lambda\neq0\) and dividing its output
weight by \(\lambda^2\) leaves \(F\) unchanged.
We initialized \(a_{11},a_{23}\) at \(\zeta_{1,3^{-9}}\),
\(a_{13},a_{14},a_{21},a_{22}\) at \(\zeta_{0,3^{-9}}\),
and the remaining four parameters at the Gauss point, with all ten trainable.
Replacing either the two states \(\zeta_{1,3^{-9}}\) or the four states
\(\zeta_{0,3^{-9}}\) by Gauss points reached depth-five recovery on none
of five seeds at \(\kappa\in\{1,10,100\}\) within 1,000 GD updates.

On the seed missed at width five, the target \(v_1\) is divisible by \(9\).
After two digits, every candidate with \(v_1=0\) gives the same loss whatever
digits \(w_1\) has, so nine tie exactly and the beam keeps five. At the
third digit these nine take the nine lowest losses: they agree with the
target modulo \(27\) in \(w_2\), \(v_1\) and \(v_2\), and differ only in
\(w_1\), with the candidate matching the target ninth. Mean point \(L_1\)
loss on the full data:
\begin{center}
\small
\setlength{\tabcolsep}{2pt}
\begin{tabular}{r|ccccccccc|c}
rank & 1 & 2 & 3 & 4 & 5 & 6 & 7 & 8 & 9 & 10\\
\(w_1\bmod 27\) & 24 & 6 & 15 & 0 & 9 & 18 & 12 & 3 & 21 & \\
loss & 0.02692 & 0.02711 & 0.02712 & 0.02751 & 0.02760 & 0.02765 & 0.03295 & 0.03296 & 0.03297 & 0.06045\\
\end{tabular}
\end{center}
The tenth candidate is the first that differs from the target in \(v_1\) and
\(v_2\). Full-data
scoring fails on the same seed as minibatch scoring and costs sixteen
times as much.

\paragraph{Regression beam search.}
\label{app:regression-beam}
We search depths \(v=0,\ldots,7\), indexed by the exponent in \(r=p^{-v}\),
using mean point \(L_1\) loss, with all candidates at a depth sharing one batch.
After scoring the complete depth, we retain the best distinct candidates up to
the width, and the search returns the lowest-loss candidate at the last depth.
We average recovery work and \(\log_3\) test loss over five minibatch runs
per data seed before computing the five-seed mean and sample SD.

\subsection{Modulo classification}
\label{app:modulo-protocol}

\paragraph{Learned modulo.} We stratify by residue modulo \(36=4\cdot9\)
so that the same splits are balanced for both modulo tasks.
For each seed, we randomly partition the
integers \(0,\ldots,3599\), assigning 60, 20 and 20 members of each
residue class to training, validation and test, respectively.
All methods in the modulo comparison share these splits. For the permuted-code control, a fixed random permutation reassigns the input integers while their labels remain unchanged. We repeat the initialization procedure above using the permuted training data. The Euclidean digit baseline
concatenates one-hot encodings of the eight least significant base-\(p\)
digits, with \(p=2\) for modulo 4 and \(p=3\) for modulo 9.
Every compared method is tuned independently by the shared selection rule of
\Cref{app:training-protocol}.
At the selected multipliers, plain descent, Momentum, and Adam all reach test
accuracy \(1.000\) on the matched tasks, whether evaluating the learned parameter
states or the points chosen by the fixed extraction rule.

\subsection{Quillian benchmark}
\label{app:quillian-protocol}

\paragraph{Euclidean MLP architecture.} Figure~5 of
\citet{martins2025learning} shows a learned six-dimensional embedding for each
of the 15 entities. We concatenate that embedding with the four relation
indicators and a fixed \(1\), feed the resulting 11 inputs to 15 sigmoid hidden
units, and use an affine map to 28 sigmoid outputs. Thus the conventional
parameter count is
\[
15\cdot 6 + (6+4+1)\cdot 15 + (15+1)\cdot 28 = 703.
\]
Martins reports 690 parameters, but the count in the paper source adds 15,
rather than 28, for the final-layer bias.
The
paper specifies Adam at learning rate \(0.1\), but not the batch size or epoch
budget. Our reproduction uses the Adam protocol in
\Cref{app:euclidean-optimization}. Our five
reported seeds vary both the random split and initialization; Martins instead
holds one random split fixed across ten initializations.

\paragraph{Metrics.} Within each test split, we pool the 336 propositions
across all 28 properties and report positive-class F1, average precision (AP),
and accuracy on the pooled counts, grouping equal scores at each threshold of
the precision--recall curve.

\paragraph{Quillian permutation control.} We randomly reassign complete three-dimensional code vectors among entities, keeping the code multiset, relation indicators and proposition labels fixed. We repeat the initialization procedure above using the permuted training data.

\paragraph{Quillian beam search.}\label{app:quillian-beam} We use maximum depth 8 to cover the
deepest levels of the encoded hierarchy. Beam width 10 gives zero training
error on two of our five splits, as \citet{martins2025learning} observed
on their single split. If the search reaches the depth limit or exhausts
its admissible candidates before finding a classifier with zero training
error, we return the candidate with the fewest training errors from
its last nonempty beam. Across the five splits and 28 properties,
137 of 140 fits attain zero training error.

The batch-32 beam variant uses the same width, depth limit, and splits, with a
fresh uniform sample of up to 32 training propositions shared by all candidates
at each depth. Admissible candidates, those with no error on a positive training
example, are ranked by batch errors; a candidate with no batch error is then
checked on the remaining training negatives, in rank order and including
candidates beyond the beam cutoff, and the search stops at the first with no
full-training error.
This variant attains zero training error in 133 of 140 fits.

\section{Logit Separation and Margins}

Recall from \Cref{sec:classification} that the output for class \(k\) is
\(\hat y_k=\zeta_{\hat c_k,\hat r_k}\), with logit
\[
z_k=-\log_p\max\{\hat r_k,|\hat c_k|_p\}.
\]

\begin{theorem}[Logit margins at every radius]\label{thm:margins}
Let \(X\) be a finite training set and let the class maps have the form
\(f_k(\eta,b;\vx)=G_k(\eta;\vx)+b_k\), with independent additive biases
\(b_k\). Suppose each \(G_k(\cdot;\vx)\), for \(\vx\in X\), is analytic near
parameter centers \(\eta_0\). Assume \(X\) is \textit{\(p\)-adically separable} at centers \((\eta_0,b_0)\):
every training example \(i\) of class \(c\) satisfies
\[
|\hat c_c^{(i)}|_p\le1,\qquad
|\hat c_k^{(i)}|_p\ge p\quad\text{for every }k\ne c.
\]
Then for every \(\rho\in(0,1]\) there is a parameter state with these centers
and all parameter radii positive at which every output on \(X\) has radius
\(\rho\), and every correct-class logit exceeds every incorrect-class logit
by at least \(1\) on every training example.
\end{theorem}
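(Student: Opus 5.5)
The idea is to let the independent biases carry the prescribed radius, and to make the non-bias parameter radii so small that they never reach the maximum in \cref{eq:pushforward}. Fix \(\rho\in(0,1]\). I would keep every center at \((\eta_0,b_0)\). Each bias \(b_k\) gets radius exactly \(\rho\), and every coordinate of \(\eta\) gets a common small radius \(\epsilon>0\), to be chosen below.

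The first step is to compute the output radius for a fixed example \(i\) and class \(k\). Expand \(f_k\) at the centers. The bias enters only through the linear monomial \(b_k-b_{0,k}\), with coefficient \(1\); its weight is therefore \(\rho\). Because the biases are additive and independent, no mixed monomial contains \(b_k\). Every other monomial of degree at least one comes from the Taylor series of \(G_k(\cdot;\vx^{(i)})\) in \(\eta\), with weight \(|a_{\vI}|_p\epsilon^{|\vI|}\).

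By the convergence assumption of \Cref{app:product-eval}, the weights \(|a_{\vI}|_p r_0^{|\vI|}\) tend to zero at some fixed positive radius \(r_0\), so they are bounded by some \(M\). For \(\epsilon\le r_0\) this gives
\[
\sup_{|\vI|\ge1}|a_{\vI}|_p\epsilon^{|\vI|}\le M\,\frac{\epsilon}{r_0}.
\]
This bound tends to zero uniformly in \(\vI\), and \(X\) has finitely many pairs \((i,k)\). I can therefore choose \(\epsilon\) so that all \(\eta\)-terms have weight strictly below \(\rho\). Then \cref{eq:pushforward} gives \(\hat r_k^{(i)}=\rho\) exactly, with the bias term as the unique active term, and \(\hat c_k^{(i)}\) equals the center output. \Cref{cor:pushforward} confirms that this is the actual pushforward state.

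The main obstacle is exactly this uniform choice of \(\epsilon\). It needs convergence of the centered series at some positive polyradius for each \(G_k(\cdot;\vx^{(i)})\). I would take that from the hypothesis that \(G_k\) is analytic near \(\eta_0\), together with the convergence convention already adopted in \Cref{app:product-eval}. Without this, infinitely many monomials could approach \(\rho\) in weight.

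The final step is the margin comparison. For the correct class \(c\), separability gives \(|\hat c_c^{(i)}|_p\le1\). Since \(\rho\le1\), we get \(z_c=-\log_p\max\{\rho,|\hat c_c^{(i)}|_p\}\ge0\). For \(k\ne c\), separability gives \(|\hat c_k^{(i)}|_p\ge p>1\ge\rho\). So \(z_k=v_p(\hat c_k^{(i)})\le-1\), and hence \(z_c-z_k\ge1\) on every training example.
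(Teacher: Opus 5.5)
Your proof is correct and follows the paper's argument. The biases carry radius \(\rho\), and the \(\eta\)-radii are shrunk until every weighted Taylor term of \(G_k\) is at most \(\rho\); the paper does this by scaling a convergent polyradius by \(t\le1\) with \(tM\le\rho\), while you use a common \(\epsilon\) and the bound \(M\epsilon/r_0\). The margin then follows from \(\rho\le1<p\), and your strict inequality, which makes the bias the unique active term, is a harmless strengthening of the paper's \(\max\{R_k^{(i)},\rho\}=\rho\).
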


\begin{proof}[Proof of \Cref{thm:margins}]
Fix \(\rho\in(0,1]\) and the separating centers. Each
\(G_k(\cdot;\vx^{(i)})\) has a centered power series that converges
for sufficiently small positive parameter radii. As there are finitely
many such series, we can choose positive radii \(s_j\) small enough that
all the expansions
\[
G_k(\eta;\vx^{(i)})
=G_k(\eta_0;\vx^{(i)})+
\sum_{|\alpha|\ge1}a_{k,i,\alpha}(\eta-\eta_0)^\alpha
\]
converge in the product Gauss seminorm of \Cref{app:product-eval}.
The weighted coefficients
\(|a_{k,i,\alpha}|_p\prod_j s_j^{\alpha_j}\), for \(|\alpha|\ge1\),
are bounded by some \(M>0\). Choose \(0<t\le1\) with \(tM\le\rho\)
and give coordinate \(\eta_j\) radius \(r_j=ts_j\). By
\Cref{cor:pushforward}, each \(G_k\) then has output radius
\[
R_k^{(i)}
=\max_{|\alpha|\ge1}|a_{k,i,\alpha}|_p\prod_j r_j^{\alpha_j}
\le tM\le\rho.
\]
Give each bias \(b_k\) radius \(\rho\). Since \(G_k\) does not depend on \(b_k\),
the centered bias term has coefficient \(1\) and cannot cancel with
any term of \(G_k\). The Gauss formula therefore gives
\(\hat r_k^{(i)}=\max\{R_k^{(i)},\rho\}=\rho\).
All parameter radii are positive, and the output centers are unchanged.
Since \(\rho\le1\), the separation inequalities give
\(z_c^{(i)}\ge0\) and \(z_{k'}^{(i)}\le-1\) for every \(k'\ne c\).
Thus \(z_c^{(i)}-z_{k'}^{(i)}\ge1\) on every training example.
\end{proof}

\end{document}